\newtheoremstyle{mydefstyle}{}{}{\itshape}{}{\bfseries}{:}{.5em}{#1 #2 (\thmnote{#3})}
\theoremstyle{mydefstyle}
\newtheorem{mytheory}{Theorem}
\newcommand{\argmax}{\operatornamewithlimits{argmax}}
\newcommand{\argsort}{\operatornamewithlimits{argsort}}
\long\def\advercomment#1{}
\long\def\comment#1{}
\renewcommand\section{\@startsection {section}{1}{\z@}%
                                   {-1.1ex \@plus -2ex \@minus -.2ex}%
                                   {.5ex \@plus.2ex}%
                                   {\normalfont\Large\bfseries}}
\renewcommand\subsection{\@startsection{subsection}{2}{\z@}%
                                   {-1.0ex \@plus -2ex \@minus -.2ex}%
                                   {.5ex \@plus.2ex}%
	                           {\normalfont\large\bfseries}}
\renewcommand\subsubsection{\@startsection{subsubsection}{3}{\z@}%
                                     {-.5ex\@plus -.2ex \@minus -.2ex}%
                                     {.2ex \@plus .2ex}%
                                     {\normalfont\large\bfseries}}
\renewenvironment{proof}[1][\proofname]{\par
  \vspace*{-.5 \topsep}
  \pushQED{\qed}%
  \normalfont
  \topsep0pt \partopsep0pt 
  \trivlist
  \item[\hskip\labelsep
        \itshape
    #1\@addpunct{.}]\ignorespaces
}{%
  \popQED\endtrivlist\@endpefalse
  \addvspace{2pt plus 2pt} 
}
\def\tightmath{
\abovedisplayskip=4pt plus 2pt minus 1pt 
\abovedisplayshortskip=2pt plus 1pt minus 1pt 
\belowdisplayskip=4pt plus 2pt minus 1pt 
\belowdisplayshortskip=2pt plus 1pt minus 1pt }
\def\crushmath{
\abovedisplayskip=1pt plus 1pt minus 2pt 
\abovedisplayshortskip=1pt plus 1pt minus 2pt 
\belowdisplayskip=1pt plus 1pt minus 2pt 
\belowdisplayshortskip=1pt plus 1pt minus 2pt }
\ifcvprfinal\pagestyle{empty}\fi
\begin{document}

\title{Towards Open Set Deep Networks}

\author{Abhijit Bendale, Terrance E. Boult\\
University of Colorado at Colorado Springs\\
{\tt\small \{abendale,tboult\}@vast.uccs.edu}}

\maketitle

\begin{abstract}

Deep networks have produced significant gains for various visual recognition
problems, leading to high impact academic and commercial applications.  Recent
work in deep networks highlighted that it is easy to generate images that humans
would never classify as a particular object class, yet networks classify such
images high confidence as that given class -- deep network are easily fooled
with images humans do not consider meaningful.  The closed set nature of deep
networks forces them to choose from one of the known classes leading to such
artifacts.  Recognition in the real world is open set, i.e. the recognition
system should reject unknown/unseen classes at test time. We present a
methodology to adapt deep networks for open set recognition, by introducing a
new model layer, OpenMax, which estimates the probability of an input being from
an unknown class.  A key element of estimating the unknown probability is
adapting Meta-Recognition concepts to the activation patterns in the penultimate
layer of the network. OpenMax allows rejection of ``fooling'' and unrelated open
set images presented to the system; OpenMax greatly reduces the number of {\em
  obvious errors} made by a deep network.  We prove that the OpenMax concept
provides bounded open space risk, thereby formally providing an open set
recognition solution. We evaluate the resulting open set deep networks using
pre-trained networks from the Caffe Model-zoo on ImageNet 2012 validation data,
and thousands of fooling and open set images. The proposed OpenMax model
significantly outperforms open set recognition accuracy of basic deep networks
as well as deep networks with thresholding of SoftMax probabilities.

\end{abstract}

\section{Introduction}

Computer Vision datasets have grown from few hundred images to millions of images
and from few categories to thousands of categories, thanks to research advances in
 vision and learning. Recent research in deep networks has significantly improved many aspects of
visual recognition \cite{google-LeNet15, vedaldi-VGG-bmvc14, hinton-nips12}.
Co-evolution of rich representations, scalable classification methods and large datasets 
have resulted in many commercial applications \cite{google, facebook, ebay, microsoft}.
However, a wide range of operational challenges occur while deploying recognition systems in the dynamic and ever-changing real world. 
A vast majority of recognition systems are designed for a static closed world, where the 
primary assumption is that all categories are known a priori. Deep networks,  
like many classic machine learning tools, are designed to perform closed set recognition.

Recent work on open set recognition \cite{openset-pami13,openset-prob-pami14}
and open world recognition \cite{openworld_2015}, has formalized processes for
performing recognition in settings that require rejecting unknown objects during
testing. While one can always train with an ``other'' class for uninteresting
classes ({\em known unknowns}), it is impossible to train with all possible
examples of unknown objects. Hence the need arises for designing visual
recognition tools that formally account for the ``unknown unknowns''\cite{rumsfeld2011known}.  Altough a
range of algorithms has been developed to address this issue \cite{da-aaai14,
  openset-pami13, openset-prob-pami14, socher2010learning,  novety-jenawacv15},
performing open set recognition with deep networks has remained an unsolved
problem.

\begin{figure*}
\begin{center} \includegraphics[width=\textwidth]{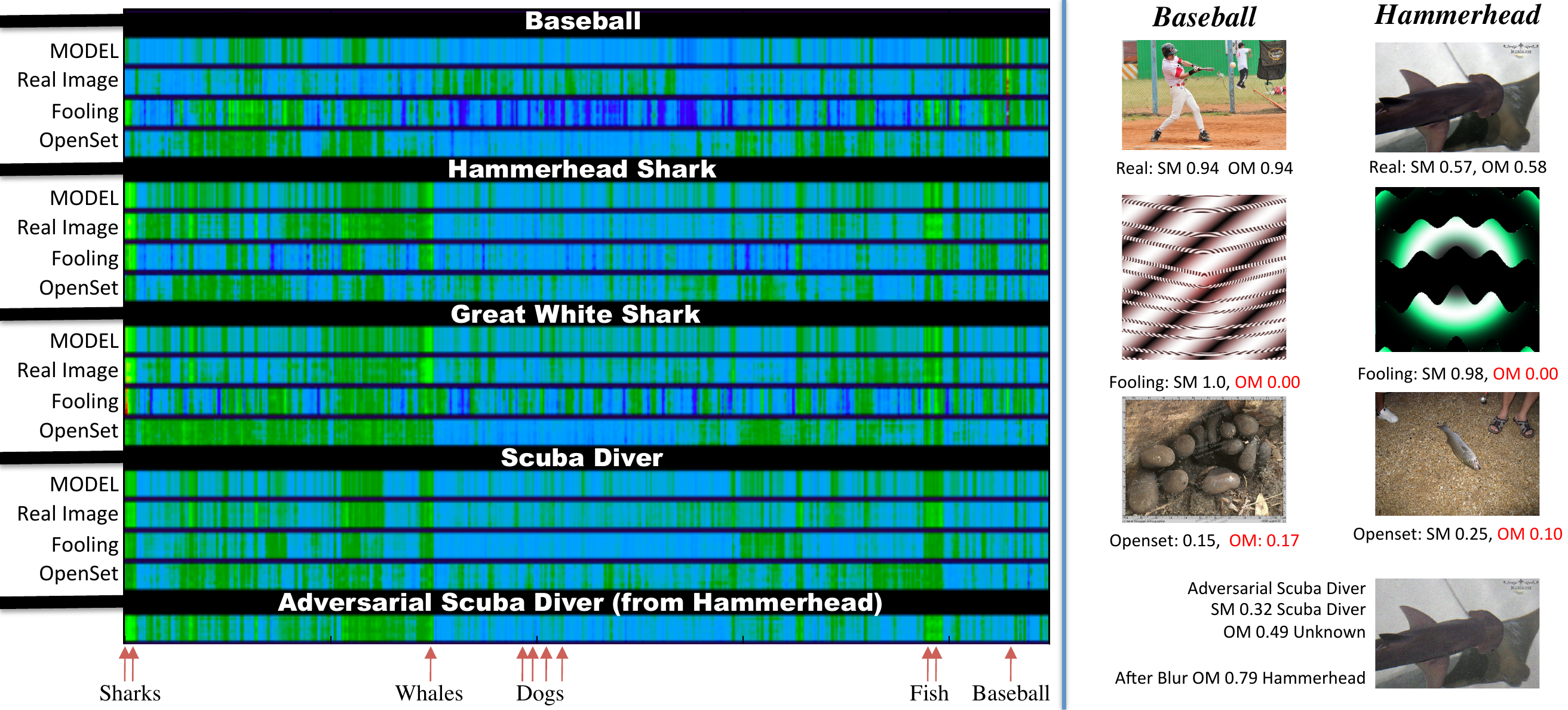} \end{center}
\vspace{-2mm}
  \caption{\small Examples showing how an activation vector model provides sufficient information for our
    Meta-Recognition and OpenMax extension of a deep network to support open-set recognition.  The OpenMax algorithm measures distance between an {\em activation vector (AV)} for an input and the model vector for the top few classes,  adjusting scores and providing an  estimate of probability of being unknown. The left side shows activation vectors (AV) for different images, with different AVs separated by black lines.  Each input image becomes an AV, displayed as 10x450 color pixels, with the vertical being one pixel for each of 10 deep network channel activation energy and the horizontal dimension showing the response for the first 450 ImageNet classes. Ranges of various category indices (sharks, whales, dogs, fish, etc.) are identified on the bottom of the image.    For each of four classes (baseball, hammerhead shark, great white shark and scuba diver), we  show an AV for 4 types of images: the model, a real  image, a fooling  image and an open set image.  The AVs show patterns of activation in which, for real images, related classes are often responding together, e.g.,  sharks share many visual features, hence correlated responses,  with other sharks, whales, large fishes,  but not with dogs or with baseballs.   Visual inspection of the AVs shows significant difference  between the response patterns for fooling and open set images compared to a real image or the model AV. For example, note the darker (deep blue) lines in many fooling images and different green patterns in many open set images. The bottom AV is from an ``adversarial'' image, wherein a hammerhead image was converted, by adding nearly invisible pixel changes, into something classified as scuba-diver.   On the right are two columns showing the associated images for two of the classes.  Each example shows the SoftMax (SM) and OpenMax (OM) scores for the  real image, the fooling and open set image that produced the AV shown on the left. The red OM scores implies the OM algorithm classified the image as unknown, but for completeness we show the OM probability of baseball/hammerhead class for which there was originally confusion.     The bottom right shows the adversarial image and its associated scores -- despite the network classifying it as a scuba diver, the visual similarity to the hammerhead is clearly stronger.  OpenMax rejects the adversarial image as an outlier from the scuba diver class. As an example of recovery from failure, we note that if the image is Gaussian blurred OpenMax classifies it as a hammerhead shark with .79 OM probability.
  }
\label{fig:teaser}
\end{figure*}

In the majority of deep networks \cite{hinton-nips12, google-LeNet15, vedaldi-VGG-bmvc14}, the output of the last fully-connected layer is fed to the SoftMax function, which produces a probability distribution over the N known class labels.  While a deep network will always have a most-likely class, one might hope that for an unknown input all classes would have low probability and that thresholding on uncertainty would reject unknown classes.  Recent papers have shown how to produce ``fooling'' \cite{nguyen2015deep} or ``rubbish'' \cite{goodfellow-iclr15} images that are visually far from the desired class but produce high-probability/confidence scores.  They strongly suggests that thresholding on uncertainty is not sufficient to determine what is unknown.  In Sec.~\ref{sec:expts}, we show that extending deep networks to threshold SoftMax probability improves open set recognition somewhat, but does not resolve the issue of fooling images.  Nothing in the theory/practice of deep networks, even with thresholded probabilities, satisfies the formal definition of open set recognition offered in \cite{openset-pami13}.  This leads to the first question addressed in this paper, {\em ``how to adapt deep networks support to open set recognition?''}

The SoftMax layer is a significant component of the problem because of its closed nature.  
We propose an alternative, OpenMax, which extends SoftMax layer by
 enabling it to predict an unknown class. OpenMax incorporates likelihood of the recognition system failure. This likelihood is used to
 estimate the probability for a given input belonging to an unknown class.
For this estimation, we adapt the concept of Meta-Recognition\cite{metarecognition-pami11,parikh-2014, ferrari-mr-eccv-12} to deep networks. We use the scores from the penultimate layer of deep networks (the fully connected layer before SoftMax, e.g., FC8) to estimate if the input is ``far'' from known training data. We call scores in that layer the {\em activation vector}(AV).
This information is incorporated in our OpenMax model and used to characterize failure of recognition system. 
By dropping the restriction for the probability for known classes to sum to 1, and rejecting inputs far from known inputs, OpenMax can formally handle unknown/unseen classes during operation.  Our experiments demonstrate that the proposed combination of OpenMax and Meta-Recognition ideas readily address open set recognition for deep networks and reject high confidence fooling images \cite{nguyen2015deep}.

While fooling/rubbish images are, to human observers, clearly not from a class of interest, adversarial images \cite{goodfellow-iclr15, Fergus-iclr14} present a more difficult challenge.  These adversarial images are visually indistinguishable from a training sample but are designed so that deep networks produce high-confidence but incorrect answers. This is different from standard open space risk because adversarial images are ``near'' a training sample in input space, for any given output class.  

A key insight in our opening deep networks is noting that ``open space risk''
should be measured in feature space, rather than in pixel space. In prior work,
open space risk is not measured in pixel space for the majority of problems
\cite{openset-pami13,openset-prob-pami14,openworld_2015}.  Thus, we ask ``is
there a feature space, ideally a layer in the deep network, where these
adversarial images are {\em far away} from training examples, i.e., a layer
where unknown, fooling and adversarial images become outliers in an open set
recognition problem?'' In Sec.~\ref{sec:mav}, we investigate the choice of the
feature space/layer in deep networks for measuring open space risk.  We show
that an extreme-value meta-recognition inspired distance normalization process
on the overall activation patterns of the penultimate network layer provides a
rejection probability for OpenMax normalization for unknown images, fooling
images and even for many adversarial images. In Fig.~\ref{fig:teaser}, we show
examples of activation patterns for our model, input images, fooling images,
adversarial images (that the system can reject) and open set images.

In summary the contributions of this paper are:
\begin{enumerate}[noitemsep,nolistsep]
  \setlength{\itemsep}{0pt}
  \setlength{\parskip}{0pt}
\item Multi-class Meta-Recognition using Activation Vectors to
    estimate the probability of deep network failure
\item Formalization of open set deep networks using Meta-Recognition and
  OpenMax, along with the proof showing that proposed approach manages open space risk for deep networks
\item Experimental analysis of the effectiveness of open set deep networks at
  rejecting unknown classes, fooling images and obvious errors from adversarial images, 
  while maintaining its accuracy on testing images
\end{enumerate}

\section{Open Set Deep Networks}

A natural approach for opening a deep network is to apply a threshold on the
output probability.  We consider this as rejecting uncertain predictions, rather
than rejecting unknown classes. It is expected images from unknown classes will
all have low probabilities, i.e., be very uncertain.  This is true only for a
small fraction of unknown inputs.  Our experiments in Sec.~\ref{sec:expts} show
that thresholding uncertain inputs helps, but is still relatively weak tool for
open set recognition.  Scheirer \etal \cite{openset-pami13} defined open space
risk as the risk associated with labeling data that is ``far'' from known
training samples.  That work provides only a general definition and does not
prescribe how to measure distance, nor does it specify the space in which such
distance is to be measured. In order to adapt deep networks to handle open set
recognition, we must ensure they manage/minimize their open space risk and have
the ability to reject unknown inputs.

Building on the concepts in \cite{openset-prob-pami14, openworld_2015}, we seek
to choose a layer (feature space) in which we can build a compact abating
probability model that can be thresholded to limit open space risk. We develop
this model as a decaying probability model based on distance from a learned
model.  In following section, we elaborate on the space and meta-recognition
approach for estimating distance from known training data, followed by a
methodology to incorporate such distance in decision function of deep
networks. We call our methodology OpenMax, an alternative for the SoftMax
function as the final layer of the network.  Finally, we show that the overall
model is a compact abating probability model, hence, it satisfies the definition
for an open set recognition.

\subsection{Multi-class Meta-Recognition}
\label{sec:mav}

Our first step is to determine when an input is likely not from a known class,
i.e., we want to add a meta-recognition algorithm \cite{metarecognition-pami11,
  parikh-2014} to analyze scores and recognize when deep networks are likely
incorrect in their assessment. Prior work on meta-recognition used the final
system scores, analyzed their distribution based on Extreme Value Theory (EVT)
and found these distributions follow Weibull distribution.  Although one might 
use the per class scores independently and consider their distribution using
EVT, that would not produce a compact abating probability because the fooling
images show that the scores themselves were not from a compact space close to
known input training data. Furthermore, a direct EVT fitting on the set of class post
recognition scores (SoftMax layer) is not meaningful with deep networks, because
the final SoftMax layer is intentionally renormalized to follow a logistic
distribution. Thus, we analyze the penultimate layer, which is generally viewed
as a per-class estimation. This per-class estimation is converted by SoftMax
function into the final output probabilities.   

We take the approach that the network values from penultimate
layer (hereafter the {\em Activation Vector (AV)}), are not an independent
per-class score estimate, but rather they provide a distribution of what classes
are ``related.''  In Sec.~\ref{s:walk} we discuss an illustrative example based
on Fig.~\ref{fig:teaser}.

\begin{algorithm}[t]
  \begin{algorithmic}[1]
     \Require FitHigh function from libMR 
     \Require {Activation levels in the penultimate network layer
       $\mathbf{v(x)}={v_1(x) \ldots v_N(x)}$}
     \Require {For each class $j$ let  $S_{i,j} = v_j(x_{i,j})$
       for each correctly classified training example $x_{i,j}$.}
     \For{$j=1 \ldots N $}
     \State {\textbf{Compute mean AV}, $\mu_j =mean_i(S_{i,j})$}
     \State {\textbf{EVT Fit} $\rho_j = (\tau_j,\kappa_j, \lambda_j)=$ FitHigh$(\|\hat{S}_{j} - \mu_j\|,\eta)$}
     \EndFor
     \State {\textbf{Return} means $\mu_j$ and libMR models $\rho_j$}
	\end{algorithmic}
  \caption{\small EVT Meta-Recognition Calibration for Open Set Deep Networks, with per class Weibull fit to $\eta$ largest distance to mean activation  vector. 
Returns libMR models $\rho_j$ which includes parameters $\tau_i$ for shifting the data as well as the Weibull shape and scale parameters:$\kappa_i$, $\lambda_i$.} 
 \label{alg:EVT}
\end{algorithm}

Our overall EVT meta-recognition algorithm is summarized in Alg.~\ref{alg:EVT}.
To recognize outliers using AVs, we adapt the concepts of Nearest Class Mean 
\cite{hastie-PNAS02, mensink-eccv12} or  Nearest Non-Outlier \cite{openworld_2015}
and apply them per class within the activation vector, as a first
approximation.  While more complex models, such as nearest class multiple
centroids (NCMC) \cite{mensink2013distance} or NCM forests \cite{ristin-ncmf},
could provide more accurate modeling, for simplicity this paper focuses on
just using a single mean.  Each class is represented as a point, a {\em mean
  activation vector (MAV)} with the mean computed over only the correctly
classified training examples (line 2 of Alg.~\ref{alg:EVT}).

Given the MAV and an input image, we measure distance between them. We could
directly threshold distance, e.g.,  use the cross-class validation approach of
\cite{openworld_2015} to determine an overall maximum distance threshold.  In
\cite{openworld_2015}, the features were subject to metric learning to normalize
them, which makes a single shared threshold viable. However,
the lack of uniformity in the AV for different classes presents a greater challenge and, hence,
we seek a per class meta-recognition model.  
In particular, on line 3 of Alg.~\ref{alg:EVT}  we use the
libMR \cite{metarecognition-pami11} FitHigh function to do Weibull fitting on the largest of the distances
between  all correct positive training  instances and the associated $\mu_i$. This results in
a parameter $\rho_i$, which is used to estimate the probability of an input being an outlier with respect to class $i$.

Given $\rho_i$, a simple rejection model would be for the user to define a
threshold that decides if an input should be rejected, e.g., ensuring 90\% of
all training data will have probability near zero of being rejected as an
outlier.  While simple to implement, it is difficult to calibrate an absolute Meta-Recognition 
threshold because it depends on the unknown unknowns.  Therefore, we choose to use this in the OpenMax algorithm described
in Sec.~\ref{alg:OpenMax} which has a continuous adjustment. 

We note that our calibration process uses only correctly classified data, for
which class $j$ is rank 1. At testing, for input $\mathbf{x}$ assume class $j$
has the largest probability, then $\rho_j(\mathbf{x})$ provides the MR estimated
probability that $\mathbf{x}$ is an outlier and should be rejected.  We use one
calibration for high-ranking (e.g., top 10), but as an extension separate
calibration for different ranks is possible.  Note when there are multiple
channels per example we compute per channel per class mean vectors
$\mu_{j,c}$ and Weibull parameters $\rho_{j,c}$.  It is worth remembering that
{\em the goal is not to determine the training class of the input, rather this
  is a meta-recognition process used to determine if the given input is from an
  unknown class and hence should be rejected}.

\subsection{Interpretation of Activation Vectors}
\label{s:walk}
In this section, we present the concept of activation vectors and
meta-recognition with illustrative examples based on Fig.~\ref{fig:teaser}.

\textbf{Closed Set:} Presume the input is a valid input of say a hammerhead
shark, i.e., the second group of activation records from Fig.~\ref{fig:teaser}.
The activation vector shows high scores for the AV dimension associated with a
great white shark.  All sharks share many direct visual features and many
contextual visual features with other sharks, whales and large fish, which is
why Fig.~\ref{fig:teaser} shows multiple higher activations (bright yellow-green)
for many ImageNet categories in those groups.  We hypothesize that for most
categories, there is a relatively consistent pattern of related activations.
The MAV captures that distribution as a single point. The AVs present a space
where we measure the distance from an input image in terms of the activation of
each class; if it is a great white shark we also expect higher activations from
say tiger and hammerhead sharks as well as whales, but very weak or no
activations from birds or baseballs.  Intuitively, this seems like
the right space in which to measure the distance during training.

\textbf{Open Set:} First let us consider an open set image, i.e., a real image
from an unknown category.  These will always be mapped by the deep network to
the class for which SoftMax provides the maximum response, e.g.,  the images of
rocks in Fig.~\ref{fig:teaser} is mapped to baseball and the
fish on the right is mapped to a hammerhead.  Sometimes open set images will
have lower confidence, but the maximum score will yield a corresponding class.
Comparing the activation vectors of the input with the MAV for a class for which
the input produced maximum response, we observe it is often far from the mean.
However, for some open set images the response provided is close  to the AV but
still has an overall low activation level.  This can occur if the input is an
``unknown'' class that is closely related to a known class, or if the object is
small enough that it is not well distinguished.  For example, if the input is
from a different type of shark or large fish, it may provide a low activation,
but the AV may not be different enough to be rejected.  For this reason, it is
still necessary for open set recognition to threshold uncertainty, in addition
to directly estimating if a class is unknown.  

\textbf{Fooling Set:} Consider a fooling input image, which was artificially
constructed to make a particular class (e.g.,  baseball or hammerhead) have high
activation score and, hence, to be detected with high confidence.  While the
artificial construction increases the class of interest's probability, the image generation process
 did not simultaneously adjust the 
scores of all related classes, resulting in an AV that is ``far'' from the model AV.
Examine the 3rd element of each class group in Fig.~\ref{fig:teaser} which show activations from fooling
images.  Many fooling images are
visually quite different and so are their activation vectors.  The many regions
of very low activation (dark blue/purple) are likely because one can increase
the output of SoftMax for a given class by reducing the activation of other
classes, which in turn reduces the denominator of the SoftMax computation.

\textbf{Adversarial Set:} Finally, consider an adversarial input image
\cite{goodfellow-iclr15, Fergus-iclr14, Yosinski-icml15DL}, which is
constructed to be close to one class but is mislabeled as another.  An example
is shown on the bottom right of Fig.~\ref{fig:teaser}.  If the adversarial image
is constructed to a nearby class, e.g., from hammerhead to great white, then the
approach proposed herein will fail to detect it as a problem -- fine-grained
category differences are not captured in the MAV.  However, adversarial images
can be constructed between any pair of image classes, see \cite{Fergus-iclr14}.
When the target class is far enough, e.g., the hammerhead and scuba example
here, or even farther such as hammerhead and baseball, the adversarial image
will have a significant difference in activation score and hence can be rejected.
We do not consider adversarial images in our experiments because the outcome
would be more a function of that adversarial images we choose to generate --
and we know of no meaningful distribution for that. If, for example, we choose
random class pairs $(a,b)$ and generated adversarial images from $a$ to $b$,
most of those would have large hierarchy distance and likely be rejected.  If we
choose the closest adversarial images, likely from nearby classes, the
activations will be close and they will not be rejected.

The result of our OpenMax process is that open set as well as fooling or
adversarial images will generally be rejected. Building a fooling or adversarial
image that is not rejected means not only getting a high score for the class of
interest, it means maintaining the relative scores for the 999 other classes. At
a minimum, the space of adversarial/fooling images is significantly reduced by
these constraints. Hopefully, any input that satisfies all the constraints is an
image that also gets human support for the class label, as did some of the
fooling images in Figure 3 of \cite{nguyen2015deep}, and as one sees in
adversarial image pairs fine-grain separated categories such as bull and great
white sharks.

One may wonder if a single MAV is sufficient to represent complex objects with
different aspects/views.  While future work should examine more complex models
that can capture different views/exemplars, e.g., NCMC
\cite{mensink2013distance} or NCM forests \cite{ristin-ncmf}. If the deep
network has actually achieved the goal of view independent recognition, then the
distribution of penultimate activation should be nearly view independent.  While
the open-jaw and side views of a shark are visually quite different, and a
multi-exemplar model may be more effective in capturing the different features
in different views, the open-jaws of different sharks are still quite similar,
as are their side views.  Hence, each view may present a relatively consistent
AV, allowing a single MAV to capture both.  Intuitively, while image features
may vary greatly with view, the relative strength of ``related classes''
represented by the AV should be far more view independent.

\subsection{OpenMax}
\label{s:OpenMax}
The standard SoftMax function is a gradient-log-normalizer of the categorical
probability distribution -- a primary reason that it is commonly used as the last
fully connected layer of a network.  The traditional definition has per-node
weights in their computation. The scores in the penultimate network layer of
Caffe-based deep networks \cite{jia2014caffe}, what we call the activation
vector, has the weighting performed in the convolution that produced it.  Let
$\mathbf{v(x)}={v_1(x),\ldots, v_N(x)}$ be the activation level for each class,
$y=1,\ldots, N$.
After deep network training, an input image $\mathbf{x}$ yields activation
vector $\mathbf{v(x)}$,  the SoftMax layer computes:
\begin{equation}
P(y=j|\mathbf{x}) = \frac{e^{\mathbf{v_j(x)}}}{\sum_{i=1}^N  e^{\mathbf{v_i(x)}}}
\label{eq:SoftMax}
\end{equation}
where the denominator sums over all classes to ensure the probabilities over all
classes sum to 1.  However, in open set recognition there are unknown classes
that will occur at test time and, hence, it is not appropriate to require the
probabilities to sum to 1.

\begin{algorithm}[t]
  \begin{algorithmic}[1]
     \Require{Activation vector for $\mathbf{v(x)}={v_1(x),\ldots, v_N(x)}$}
     \Require {\textbf{means} $\mu_j$ and libMR models $\rho_j=(\tau_i,\lambda_i,\kappa_i)$}
     \Require {$\alpha$, the numer of ``top'' classes to revise}
     \State {Let $s(i) = \argsort (v_j(x))$; Let $\omega_{j}=1$ }
     \For{$i=1,\ldots, \alpha $}
     \State{$\omega_{s(i)}(x) =1-\frac{\alpha-i}{\alpha} e^{-\left(\frac{\|x-\tau_{s(i)}\|}{\lambda_{s(i)}}\right)^{\kappa_{s(i)}}}$}
     \EndFor
     \State{Revise activation vector $\hat{v}(x) = \mathbf{v(x)} \circ \mathbf{\omega(x)}$  }
     \State{Define $\hat{v}_0(x) = \sum_i v_i(x)(1-\omega_i(x))$.}
     \State{\begin{equation}\hat P(y=j|\mathbf{x}) = \frac{e^{\mathbf{\hat v_j(x)}}}{\sum_{i=0}^N  e^{\mathbf{\hat v_i(x)}}}\label{eq:OpenMax}\end{equation}}
     \State{Let  $y^*=\argmax_j P(y=j|\mathbf{x})$}
     \State{Reject input if $y^*==0$ or $P(y=y^*|\mathbf{x}) < \epsilon$}
  \end{algorithmic}

  \caption{\small OpenMax probability estimation with rejection of unknown or uncertain inputs.}
 \label{alg:OpenMax}
\end{algorithm}

To adapt SoftMax for open set, let $\rho$ be a vector of meta-recognition models
for each class estimated by Alg.~\ref{alg:EVT}.  In Alg.~\ref{alg:OpenMax} we
summarize the steps for OpenMax computation.  For convenience we define the {\em
  unknown unknown} class to be at index $0$.  We use the Weibull CDF probability
(line 3 of Alg.~\ref{alg:OpenMax}) on the distance between $\mathbf{x}$ and
$\mu_i$ for the core of the rejection estimation.  The model $\mu_i$ is computed
using the images associated with category $i$, images that were classified correctly
(top-1) during training process.  We expect the EVT function of distance to
provide a meaningful probability only for few top ranks.  Thus in line 3 of
Alg.~\ref{alg:OpenMax}, we compute weights for the $\alpha$ largest activation
classes and use it to scale the Weibull CDF probability. We then compute revised
activation vector with the top scores changed. We compute a pseudo-activation
for the unknown unknown class, keeping the total activation level constant.
Including the unknown unknown class, the new revised activation compute the
OpenMax probabilities as in Eq.~\ref{eq:OpenMax}.

OpenMax provides probabilities that support explicit rejection 
when the unknown unknown class ($y=0$) has the largest probability.  This Meta-Recognition
approach is a first step toward determination of unknown unknown classes and our
experiments show that a single MAV works reasonably well at detecting fooling images,
and is better than just thresholding on uncertainty.  However, in any system
that produces certainty estimates, thresholding on uncertainty is still a valid
type of meta-recognition and should not be ignored.  The final OpenMax approach
thus also rejects unknown as well as uncertain inputs in line 9 of
Alg.\ref{alg:OpenMax}.

 \begin{figure}[t]
\centering

\includegraphics[width=1.0\columnwidth]{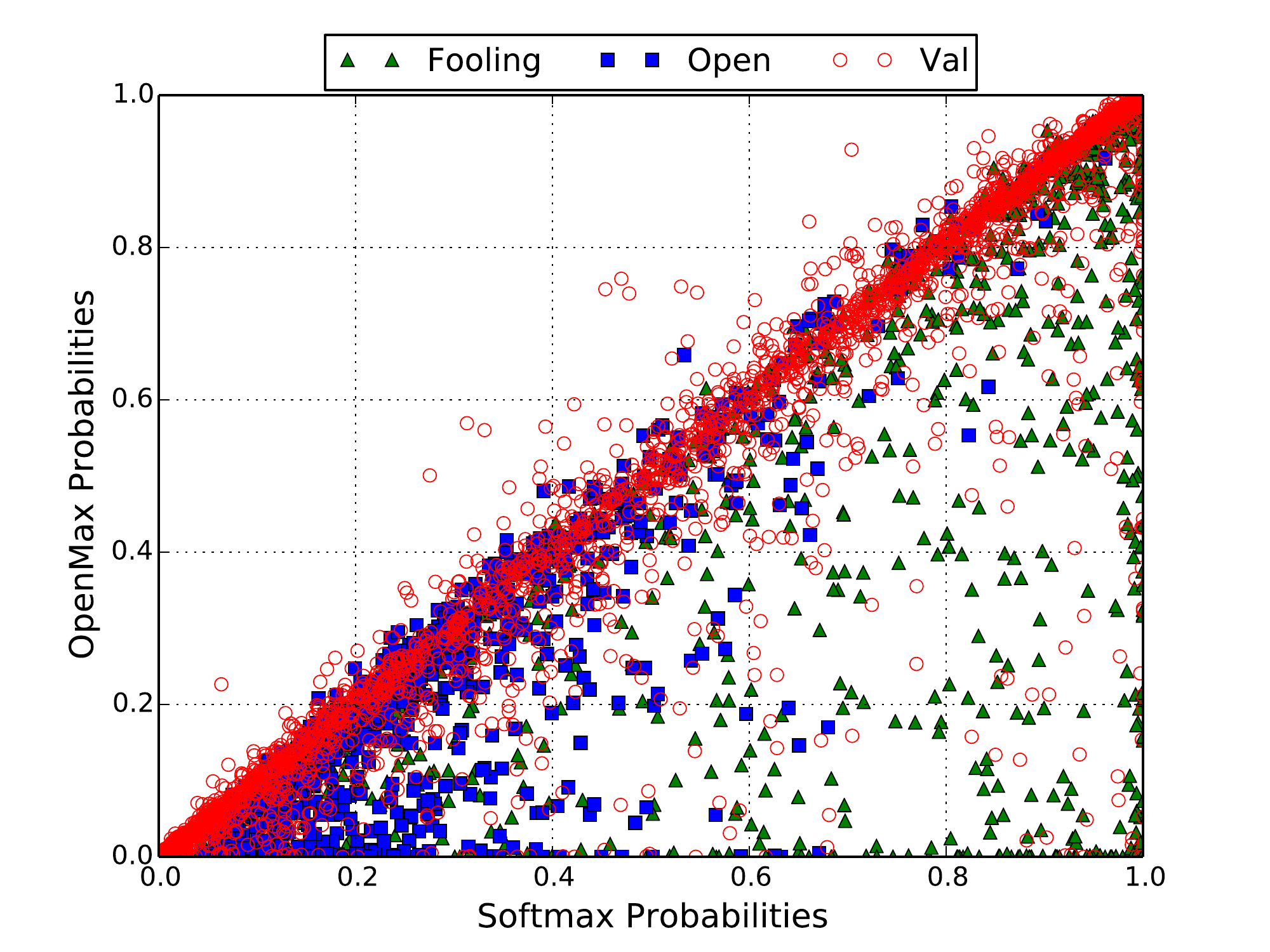}
\caption{\small A plot of OpenMax probabilities vs SoftMax probabilities for the
  fooling (triangle), open set (square) and validation (circle) for 100
  categories from ImageNet 2012.  The more off-diagonal a point, the more
  OpenMax altered the probabilities. Below the diagonal means OpenMax estimation
  reduced the inputs probability of being in the class.  For some inputs OpenMax
  increased the classes probability, which occurs when the leading
  class is partially rejected thereby reducing its probability and increasing a
  second or higher ranked class.  Uncertainty-based rejection threshold
  ($\epsilon$) selection can optimize F-measure between correctly classifying
  the training examples while rejecting open set examples. (Fooling images are
  not used for threshold selection.)  The number of triangles and squares below
  the diagonal means that uncertainty thresholding on OpenMax threshold
  (vertical direction), is better than thresholding on SoftMax (horizontal
  direction).}
\label{fig:probplot}
\end{figure}

To select the hyper-parameters $\epsilon,\eta,$ and $\alpha$, we can do a grid
search calibration procedure using a set of training images plus a sampling of
open set images, optimizing F-measure over the set.  The goal here is basic
calibration for overall scale/sensitivity selection, not to optimize the
threshold over the space of {unknown unknowns}, which cannot be done
experimentally.

Note that the computation of the unknown unknown class probability inherently alters 
all probabilities estimated. For a fixed threshold and inputs that have even a small chance of being unknown, OpenMax will reject more inputs than SoftMax. Fig.~\ref{fig:probplot} shows the OpenMax and SoftMax probabilities
for 100 example images, 50 training images and 50 open set images as well as for
fooling images. The more off-diagonal the more OpenMax altered the
probabilities.  Threshold selection for uncertainty based rejection $\epsilon$,
would find a balance between keeping the training examples while rejecting open
set examples. Fooling images were not used for threshold selection.  

While not part of our experimental evaluation, note that OpenMax also provides
meaningful rank ordering via its estimated probability. Thus OpenMax directly
supports a top-5 class output with rejection.  It is also important to note that
because of the re-calibration of the activation scores $\hat{v}_i(x)$, OpenMax
often does not produce the same rank ordering of the scores.

 \subsection{OpenMax Compact Abating Property}

While thresholding uncertainty does provide the
ability to reject some inputs, it has not been shown to formally limit open
space risk for deep networks.  It should be easy to see that in terms of the
activation vector, the positively labeled space for SoftMax is not restricted to
be near the training space, since any increase in the maximum class score
increases its probability while decreasing the probability of other
classes. With sufficient increase in the maximum directions, even large changes in other
dimension will still provide large activation for the leading class.  While in
theory one might say the deep network activations are bounded, the fooling
images of \cite{nguyen2015deep}, are convincing evidence that SoftMax cannot
manage open space risk.

 \begin{mytheory}[Open Set Deep Networks]
A deep network extended using Meta-Recognition on activation vectors as in
Alg.~\ref{alg:OpenMax}, with the SoftMax later adapted to OpenMax, as in
Eq.~\ref{eq:OpenMax}, provides an open set recognition function.
 \end{mytheory}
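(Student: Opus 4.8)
The plan is to reduce the claim to the machinery of \emph{open space risk} and \emph{compact abating probability} (CAP) models established in \cite{openset-pami13, openset-prob-pami14}. Recall that a recognition function performs open set recognition precisely when its open space risk is bounded, where open space risk penalizes labeling as ``known'' any region of feature space lying far from the known training data. The central tool I would invoke is the result of \cite{openset-prob-pami14}: thresholding a compact abating probability model---a probability that decays monotonically as one moves away from the known data and whose super-level sets are bounded---yields a recognition function with bounded open space risk. Consequently it suffices to exhibit the thresholded OpenMax decision rule of Alg.~\ref{alg:OpenMax} as a thresholded CAP model in the activation-vector feature space. Following the paper's design choice, I would stress at the outset that open space risk is measured in AV space rather than pixel space, so all distances below are $\|\cdot\|$ distances from the class means $\mu_j$.

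First I would establish the per-class abating property. The quantity $e^{-(\|x-\tau_{j}\|/\lambda_{j})^{\kappa_{j}}}$ in line~3 of Alg.~\ref{alg:OpenMax} is (one minus) a Weibull CDF of the shifted distance $\|x-\tau_j\|$ from $\mu_j$, hence a strictly monotone function of that distance tending to its extreme value as the distance grows without bound. I would then show that, as a test point moves arbitrarily far from every class mean, the recalibration weights $\omega_j(x)$ drive each revised known-class activation $\hat v_j(x)$ toward its abated value while the pseudo-activation $\hat v_0(x)=\sum_i v_i(x)(1-\omega_i(x))$ for the unknown class is held up, so that the OpenMax probability $\hat P(y=j\mid \mathbf{x})$ of every known class decreases toward $0$ while $\hat P(y=0\mid\mathbf{x})$ grows. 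This is exactly the abating behavior required of a CAP model, now transported through the SoftMax renormalization of Eq.~\ref{eq:OpenMax}.

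Next I would convert the abating property into boundedness of the positively labeled region. For each known class $j$, the set of inputs \emph{accepted} as $j$ is contained in $\{\,\mathbf{x}: y^*=j \text{ and } \hat P(y=y^*\mid\mathbf{x})\ge\epsilon\,\}$ by the rejection rule in line~9. Because $\hat P(y=j\mid\mathbf{x})$ abates with distance from $\mu_j$, each such super-level set is contained in a bounded ball about $\mu_j$, and the explicit rejection when $y^*=0$ removes the complementary region where the unknown class dominates. I would conclude that the union over the finitely many known classes of these bounded acceptance regions is itself bounded, so the measure assigned to open space---the region far from all $\mu_j$, which is rejected---is finite relative to the accepted region, giving bounded open space risk and hence, by \cite{openset-prob-pami14}, an open set recognition function.

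The main obstacle I anticipate is the monotonicity step, because OpenMax does not threshold a single per-class probability in isolation: the probability of class $j$ couples to the distances to the other top-$\alpha$ means through both the joint weighting $\omega$ and the shared SoftMax denominator, which now also includes $\hat v_0$. I would handle this by arguing about the correct notion of ``entering open space''---moving simultaneously far from all $\mu_j$---under which every $\omega_j$ saturates and $\hat v_0$ necessarily dominates the normalizer, rather than attempting to prove monotonicity along every individual direction. A secondary subtlety worth making explicit is that the guarantee concerns open space risk \emph{in feature space}; boundedness of the accepted AV region does not by itself bound its pre-image in pixel space, which is precisely why adversarial inputs that map near some $\mu_j$ fall outside the scope of the theorem.
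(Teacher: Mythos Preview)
Your proposal is correct and follows essentially the same route as the paper: establish that the Weibull CDF is monotone in $\|x-\mu_j\|$ so that $1-\omega_j(x)$ abates, identify this as a compact abating probability in the AV feature space, and then invoke the prior CAP machinery (the paper cites Theorems~1 and~2 of \cite{openworld_2015} rather than \cite{openset-prob-pami14} directly) to conclude that thresholding the OpenMax output manages open space risk. Your treatment is in fact more careful than the paper's own brief proof---you explicitly flag the coupling through the shared SoftMax denominator and $\hat v_0$, and you correctly resolve it by reasoning about simultaneous distance from all $\mu_j$ rather than per-direction monotonicity, a subtlety the paper elides under the phrase ``weighted monotonic transformation.''
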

 \begin{proof} 
The Meta-Recognition probability (CDF of a Weibull) is a monotonically increasing function of
$\|\mu_i-x\|$, and hence $1-\omega_i(x)$ is monotonically decreasing.  Thus,
they form the basis for a compact abating probability as defined in
\cite{openset-prob-pami14}.  Since the OpenMax transformation is a weighted
monotonic transformation of the Meta-Recognition probability, applying Theorems 1 and 2 of
\cite{openworld_2015} yield that thresholding the OpenMax probability of the
unknown manages open space risk as measured in the AV feature space.  Thus it is
an open set recognition function.
\end{proof}

\section{Experimental Analysis} \label{sec:expts}

In this section, we present experiments carried out in order to evaluate the effectiveness of the
proposed OpenMax approach for open set recognition tasks with deep neural networks.  Our evaluation is based on
ImageNet Large Scale Visual Recognition Competition (ILSVRC) 2012 dataset with 1K visual categories.
The dataset contains around 1.3M images for training (with approximately 1K to 1.3K images per 
category), 50K images for validation and 150K images for testing. Since test labels for ILSVRC 2012 are
not publicly available, like others have done we report performance on validation set \cite{hinton-nips12, nguyen2015deep,
karen-iclr15}. We use a pre-trained AlexNet (BVLC AlexNet) deep neural network
provided by the Caffe software package \cite{jia2014caffe}. BVLC AlexNet is reported to obtain
approximately 57.1\% top-1 accuracy on ILSVRC 2012 validation set. The choice of pre-trained BVLC AlexNet
is deliberate, since it is open source and one of the most widely used packages available for deep learning.
 
To ensure proper open set evaluation, we apply a test protocol similar to the ones
presented in \cite{openset-prob-pami14, openworld_2015}. During the testing phase,
we test the system with all the 1000 categories from ILSVRC 2012 validation
set, fooling categories and previously unseen categories. The
previously unseen categories are selected from ILSVRC 2010. It has been noted by
Ruskovsky \etal \cite{ILSVRC15} that approximately 360 categories from ILSVRC
2010 were discarded and not used in ILSVRC 2012.   Images from these 360
categories as the {\em open set} images, i.e., unseen or unknown categories.

Fooling images are generally totally unrecognizable to humans as belonging to
the given category but deep networks report with near certainty they are from
the specified category. We use fooling images provided by Nguyen \etal
\cite{nguyen2015deep} that were generated by an evolutionary algorithm or by gradient
ascent in pixel space.  The final test set consists of 50K closed set images
from ILSVRC 2012, 15K open set images (from the 360 distinct categories from ILSVRC
2010) and 15K fooling images (with 15 images each per ILSVRC 2012 categories).

\textbf{Training Phase:} As discussed previously (Alg.~\ref{alg:EVT}), we
consider the penultimate layer (fully connected layer 8 , i.e., \textit{FC8})
for computation of mean activation vectors (MAV). The MAV vector is computed for
each class by considering the training examples that deep networks training
classified correctly for the respective class.  MAV is computed for each
crop/channel separately. Distance between each correctly classified training
example and MAV for particular class is computed to obtain class specific
distance distribution. For these experiments we use a distance that is a
weighted combination of normalized Euclidean and cosine distances.  Supplemental
material shows results with pure Euclidean and other measures that overall
perform similarly.  Parameters of Weibull distribution are estimated on these
distances. This process is repeated for each of the 1000 classes in ILSVRC
2012. The exact length of tail size for estimating parameters of Weibull
distribution is obtained during parameter estimation phase over a small set of
hold out data. This process is repeated multiple times to obtain an overall tail
size of 20.

 \begin{figure}[t]
\centering

\includegraphics[width=1.0\columnwidth]{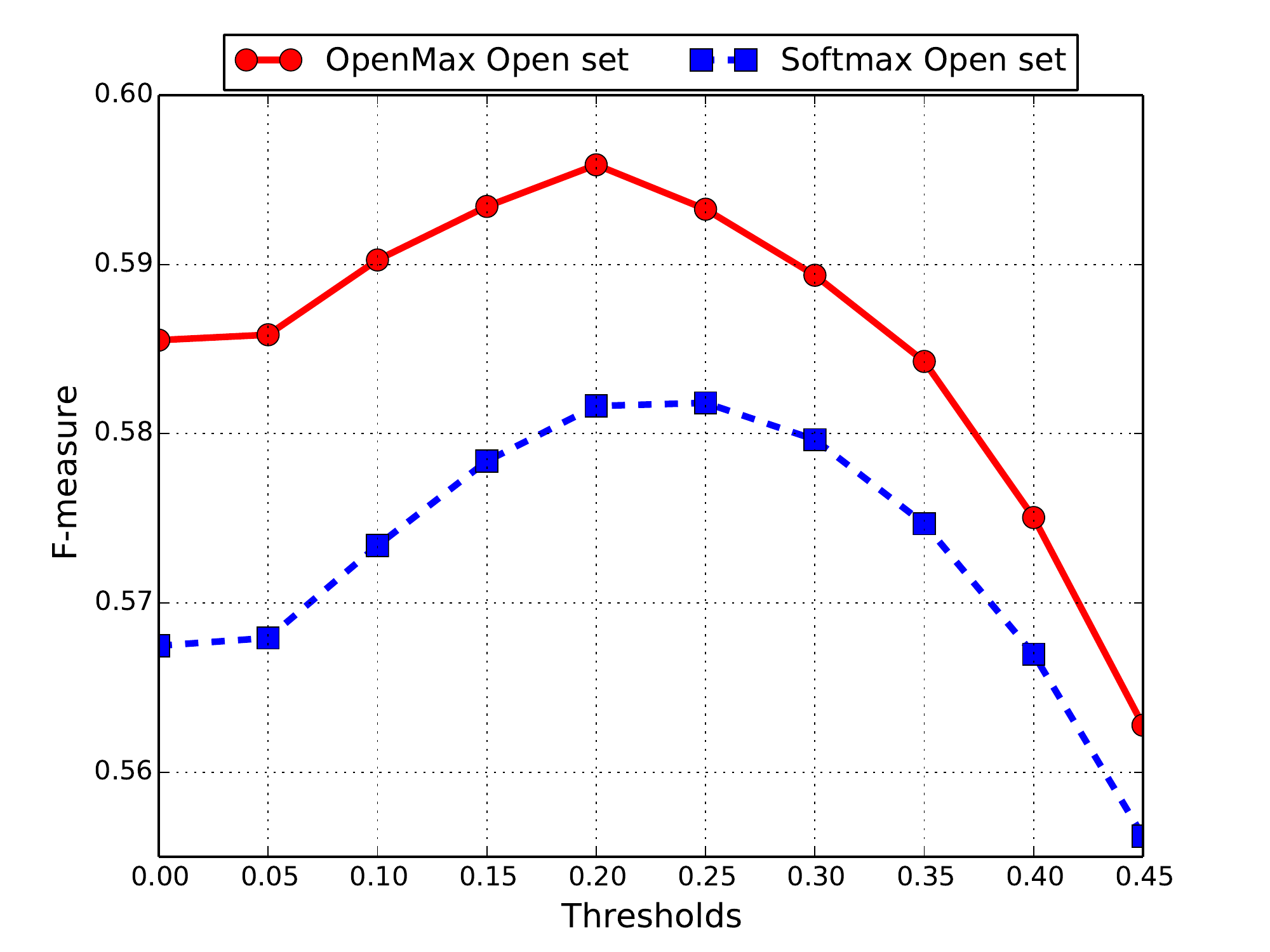}
\caption{\small OpenMax and SoftMax-w/threshold performance shown as F-measure
  as a function of threshold on output probabilities. The test uses 80,000
  images, with 50,000 validation images from ILSVRC 2012, 15,000 fooling images
  and 15,000 ``unknown'' images draw from ILSVRC 2010 categories not used in
  2012. The base deep network performance would be the same as threshold 0
  of SoftMax-w/threshold. OpenMax performance gain is nearly 4.3\%
  improvement accuracy over SoftMax with optimal threshold, and 12.3\% over the base deep network.
  Putting that in context, over the test set OpenMax correctly
  classified 3450 more images than SoftMax and 9847 more than the base deep network. }
\label{fig:fmeasureplot}
\end{figure} 
 
\textbf{Testing Phase:} During testing, each test image goes through the
OpenMax score calibration process as discussed previously in
Alg.~\ref{alg:OpenMax}. The activation vectors are the values in the
\textit{FC8} layer for a test image that consists of 1000x10 dimensional values
corresponding to each class and each channel.  For each channel in each class,
the input is compared using a per class MAV and per class Weibull parameters.
During testing, distance with respect to the MAV is computed and revised OpenMax
activations are obtained, including the new unknown class (see lines 5\&6 of
Alg.~\ref{alg:OpenMax}).  The OpenMax probability is computed per channel, using
the revised activations (Eq.~\ref{eq:OpenMax}) yielding an output of 1001x10
probabilities. For each class, the average over the 10 channel gives the overall
OpenMax probability.  Finally, the class with the maximum over the
1001 probabilities is the predicted class. This maximum probability is then
subject to the uncertainty threshold (line 9).  In this work we focus on strict
top-1 predictions.

 \begin{figure}[t]
\centering
\includegraphics[width=.9\columnwidth]{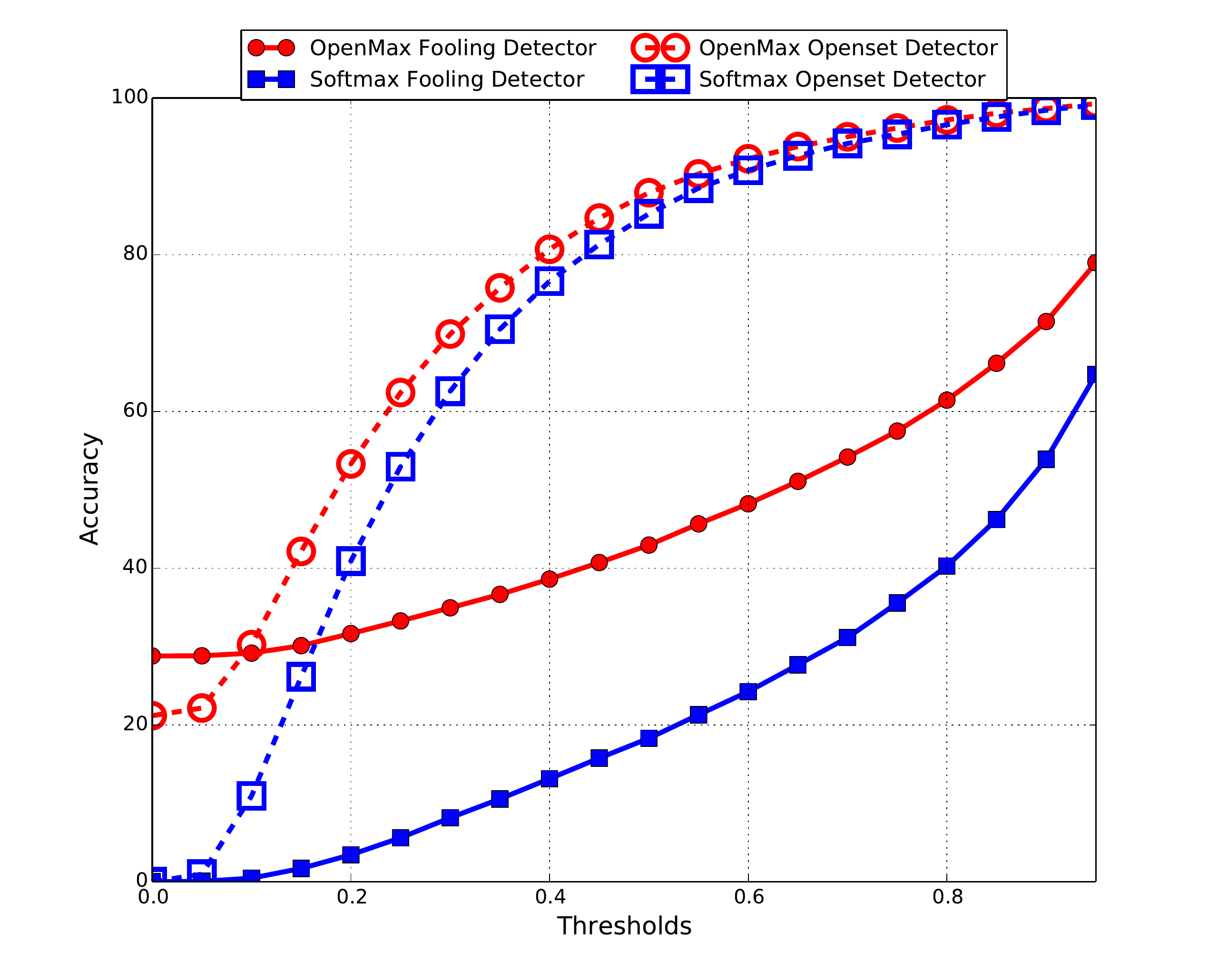}
\caption{\small The above figure shows performance of OpenMax and SoftMax as a
  detector for fooling images and for open set test images. F-measure is
  computed for varying thresholds on OpenMax and SoftMax probability values. The
  proposed approach of OpenMax performs very well for rejecting fooling images
  during prediction phase.}
\label{fig:foolplot}
\end{figure}

\textbf{Evaluation:} ILSVRC 2012 is a large scale multi-class classification
problem and top-1 or top-5 accuracy is used to measure the effectiveness of a
classification algorithm \cite{ILSVRC15}. Multi-class classification error for a
closed set system can be computed by keeping track of incorrect classifications.
For open set testing the evaluation must keep track of the errors that occur due
to standard multi-class classification over known categories as well as errors
between known and unknown categories. As suggested in
\cite{socher2010learning,openset-pami13} we use F-measure to evaluate open set
performance.  \comment{F-measure is can be computed as the harmonic mean of
  precision and recall which in terms of True Positives (TP), False Positives
  (FP) and False Negatives (FN) is given by:
\begin{equation}
\text{F-measure} = \frac{2TP}{2TP + FP + FN}.
\end{equation}}
For open
set recognition testing, F-measure is better than accuracy because it is not
inflated by true negatives.

For a given threshold on OpenMax/SoftMax probability values, we compute true
positives, false positives and false negatives over the entire dataset. For example,
when testing the system with images from validation set, fooling set and open
set (see Fig.~\ref{fig:fmeasureplot}), true positives are defined as the correct
classifications on the validation set, false positives are incorrect classifications
on the validation set and false negatives are images from the fooling set and open set
categories that the system incorrectly classified as known examples. Fig.~\ref{fig:fmeasureplot} shows performance of
OpenMax and SoftMax for varying thresholds. Our experiments show that the
proposed approach of OpenMax consistently obtains higher F-measure on open set
testing.


\section{Discussion}

We have seen that with our OpenMax architecture, we can automatically reject
many unknown open set and fooling images as well as rejecting some adversarial
images, while having only modest impact to the true classification rate. One
of the obvious questions when using Meta-Recognition is ``what do we do with
rejected inputs?''  While that is best left up to the operational system
designer, there are multiple possibilities. OpenMax can be treated as a novelty
detector in the scenario presented open world recognition \cite{openworld_2015}
after that human label the data and the system incrementally learn new
categories. Or detection can used as a flag to bring in other
modalities\cite{socher2013zero,frome2013devise}.

A second approach, especially helpful with adversarial or noisy images, is to
try to remove small noise that might have lead to the miss classification.  For
example, the bottom right of Fig.~\ref{fig:teaser}, showed an adversarial image
wherein a hammerhead shark image with noise was incorrectly classified by base
deep network as a scuba diver.  OpenMax Rejects the input, but with a small
amount of simple gaussian blur, the image can be reprocessed and is accepted as a
hammerhead shark by with probability 0.79.

We used non-test data for parameter tuning, and for brevity only showed
performance variation with respect to the uncertainty threshold shared by both
SoftMax with threshold and OpenMax. The supplemental material shows variation of
a wider range of OpenMax parameters, e.g. one can increase open set and fooling
rejection capability at the expense of rejecting more of the true classes. In future work, such
increase in true class rejection might be mitigated by increasing the expressiveness of the AV
model, e.g. moving to multiple MAVs per class.  This might allow it to better
capture different contexts for the same object, e.g. a baseball on a desk has a
different context, hence, may have different ``related'' classes in the AV than
say a baseball being thrown by a pitcher.

 \begin{figure}[t]
\centering
\includegraphics[width=\columnwidth]{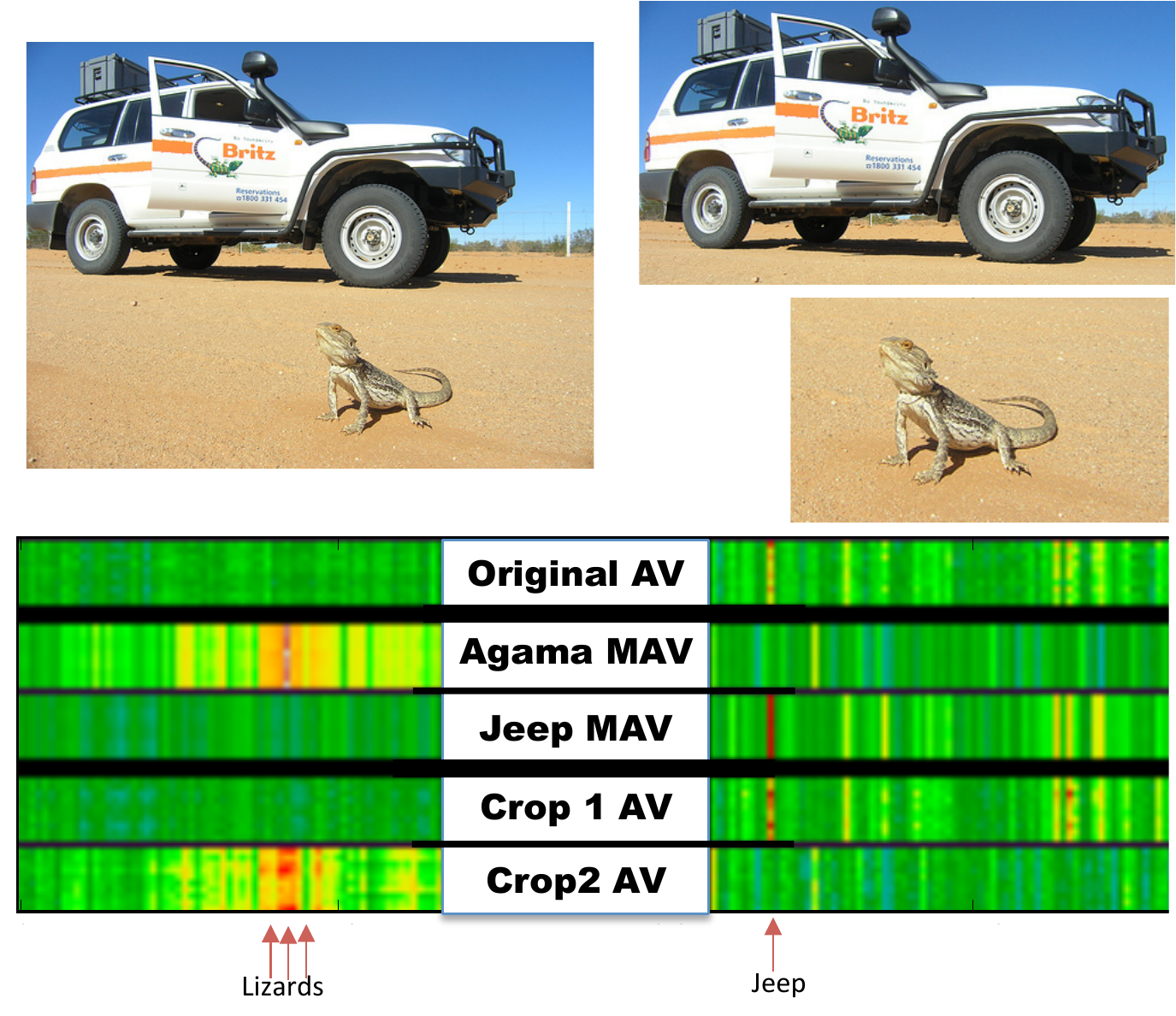}
\caption{\small OpenMax also predict failure during training as in this example.  The official class is agama but the MAV for agama is rejected for this input, and the highest scoring class is jeep with probability 0.26.  However, cropping out image regions can find windows where the agama is well detected and another where the Jeep is detected.  Crop 1 is the jeep region, crop 2 is agama and the crops AV clearly match the appropriate model and are accepted with probability 0.32 and 0.21 respectively. }
\label{f:agamma}
\end{figure}

Interestingly, we have observe that the OpenMax rejection process often
identifies/rejects the ImageNet images that the deep network incorrectly
classified, especially images with multiple objects. Similarly, many samples that 
are far away from training data have multiple objects in the scene.
Thus, other uses of the OpenMax rejection can be to improve training process and aid
in developing better localization techniques \cite{ferrari-object-bmvc15, sivic-local-cvpr15}. See Fig.~\ref{f:agamma} for an example.

\section{Towards Open Set Deep Networks: Supplemental}

In this supplement, we provide we provide additional material to further the reader’s
understanding of the work on Open Set Deep Networks, Mean Activation Vectors,
Open Set Recognition and OpenMax algorithm. We present additional experiments on ILSVRC 2012 dataset.
First we present experiments to illustrate performance of OpenMax for various parameters
of EVT calibration (alg. 1, main paper) followed by sensitivity of OpenMax to total number of ``top classes''
(i.e. $\alpha$ in alg. 2, main paper) to consider for recalibrating SoftMax scores. We then
present different distance measures namely euclidean and cosine distance used for EVT calibration.
We then illustrate working of OpenMax with qualitative examples for open set evaluation performed
during the testing phase. Finally, we illustrate the distribution of Mean Activation Vectors with a class
confusion map.

\section{Parameters for OpenMax Calibration}

\subsection{Tail Sizes for EVT Calibration}

In this section we present extended analysis of effect of tail sizes used for EVT fitting in Alg 1 in main paper on the performance
of the proposed OpenMax algorithm. We tried multiple tail sizes for estimating parameters of Weibull distribution (line 3, Alg 1, main paper).
We found that as the tail size increased, OpenMax algorithm became very robust at rejecting images from open set and fooling set.
OpenMax continued to perform much better than SoftMax in this setting. The results of this experiments are presented
in Fig ~\ref{fig:tailsizes}. However, beyond tail size 20, we saw performance drop on validation set. This phenomenon 
can be  seen in Fig ~\ref{fig:tailsizes-fmeasures}, since F-Measure obtained on OpenMax starts to drop
beyond tail size 20. Thus, there is an optimal balance to be maintained between rejecting images from open set and fooling set, while
maintaining correct classification rate on validation set of ILSVRC 2012. 

\begin{figure*}
        \centering
        \begin{subfigure}[b]{0.3\textwidth}
                \includegraphics[width=\textwidth]{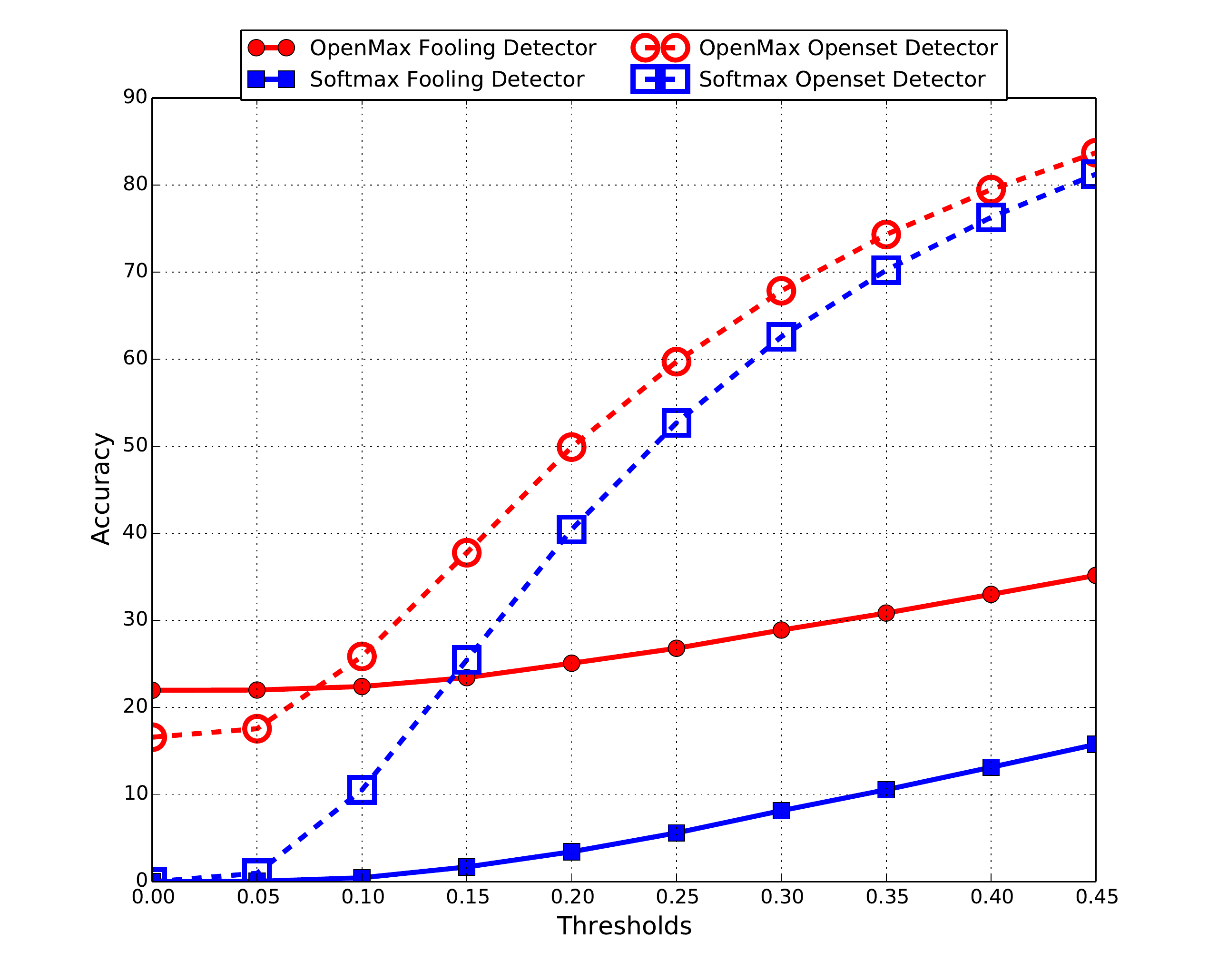}
                \caption{Tail Size 10}
                                 \vspace{20pt}
        \end{subfigure}%
        ~ 
        \begin{subfigure}[b]{0.3\textwidth}
                \includegraphics[width=\textwidth]{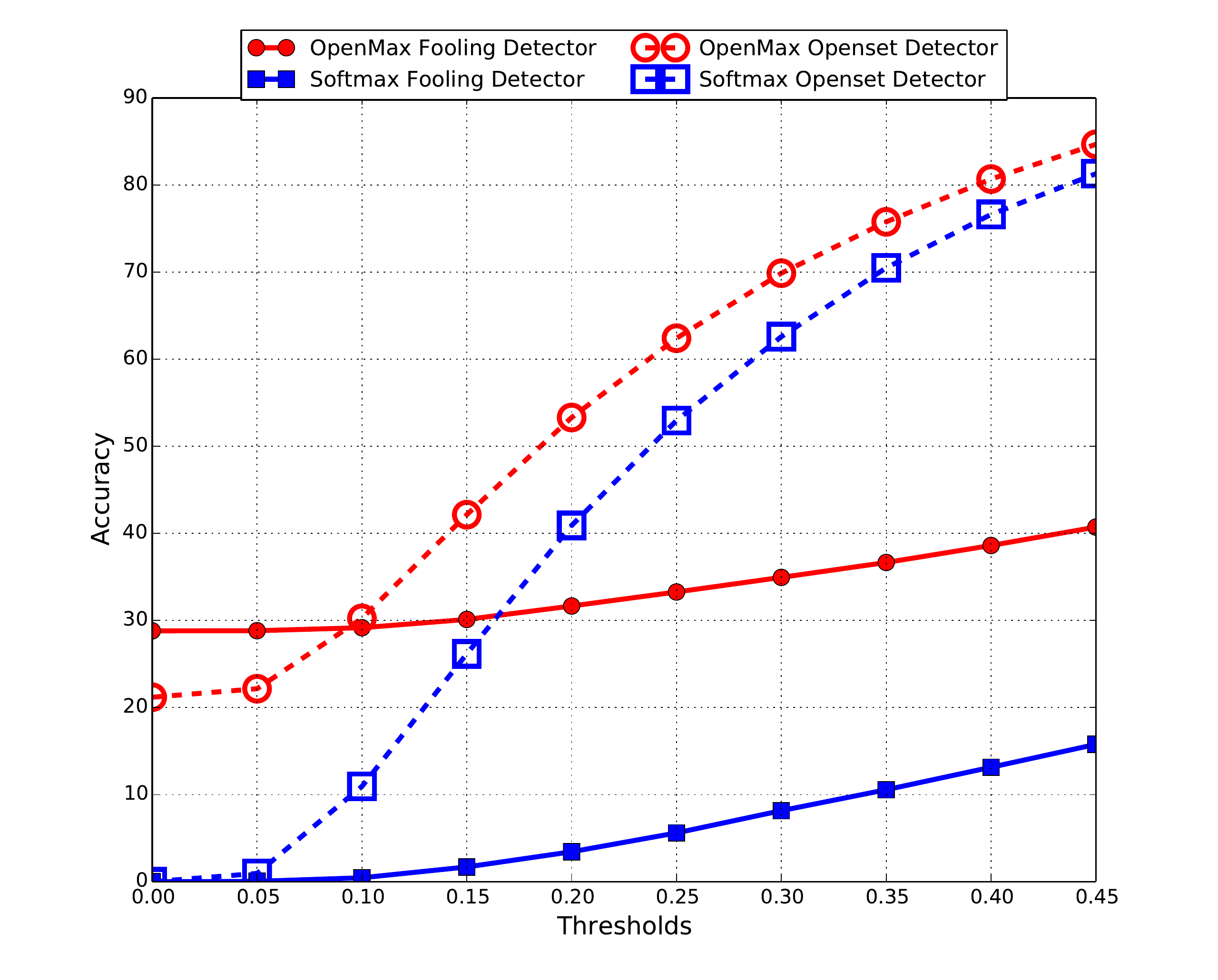}
                \caption{Tail Size 20 (optimal)}
                                 \vspace{20pt}
        \end{subfigure}
        ~ 
        \begin{subfigure}[b]{0.3\textwidth}
                \includegraphics[width=\textwidth]{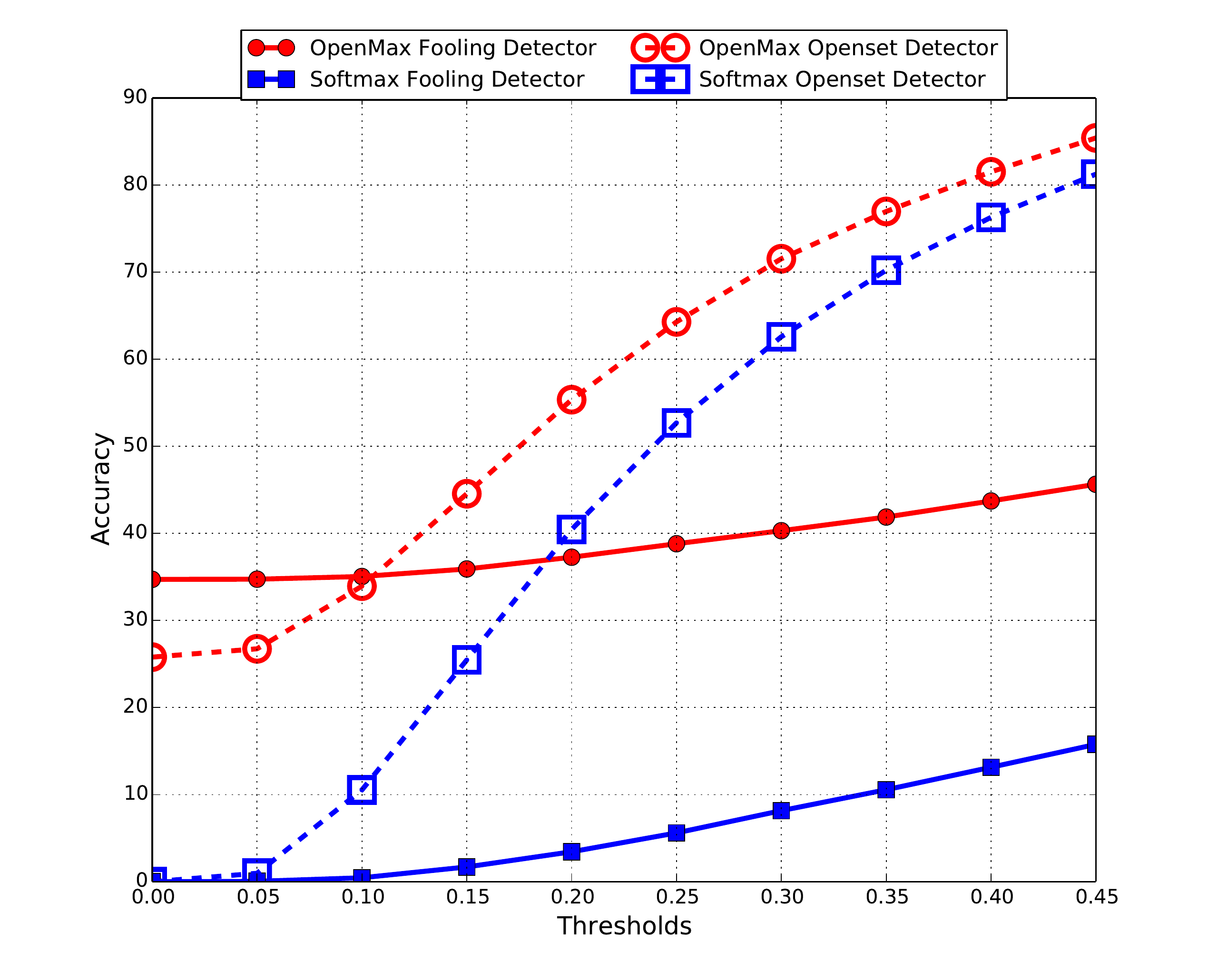}
                \caption{Tail Size 25}
                                 \vspace{20pt}
        \end{subfigure}
        
        \begin{subfigure}[b]{0.3\textwidth}
                \includegraphics[width=\textwidth]{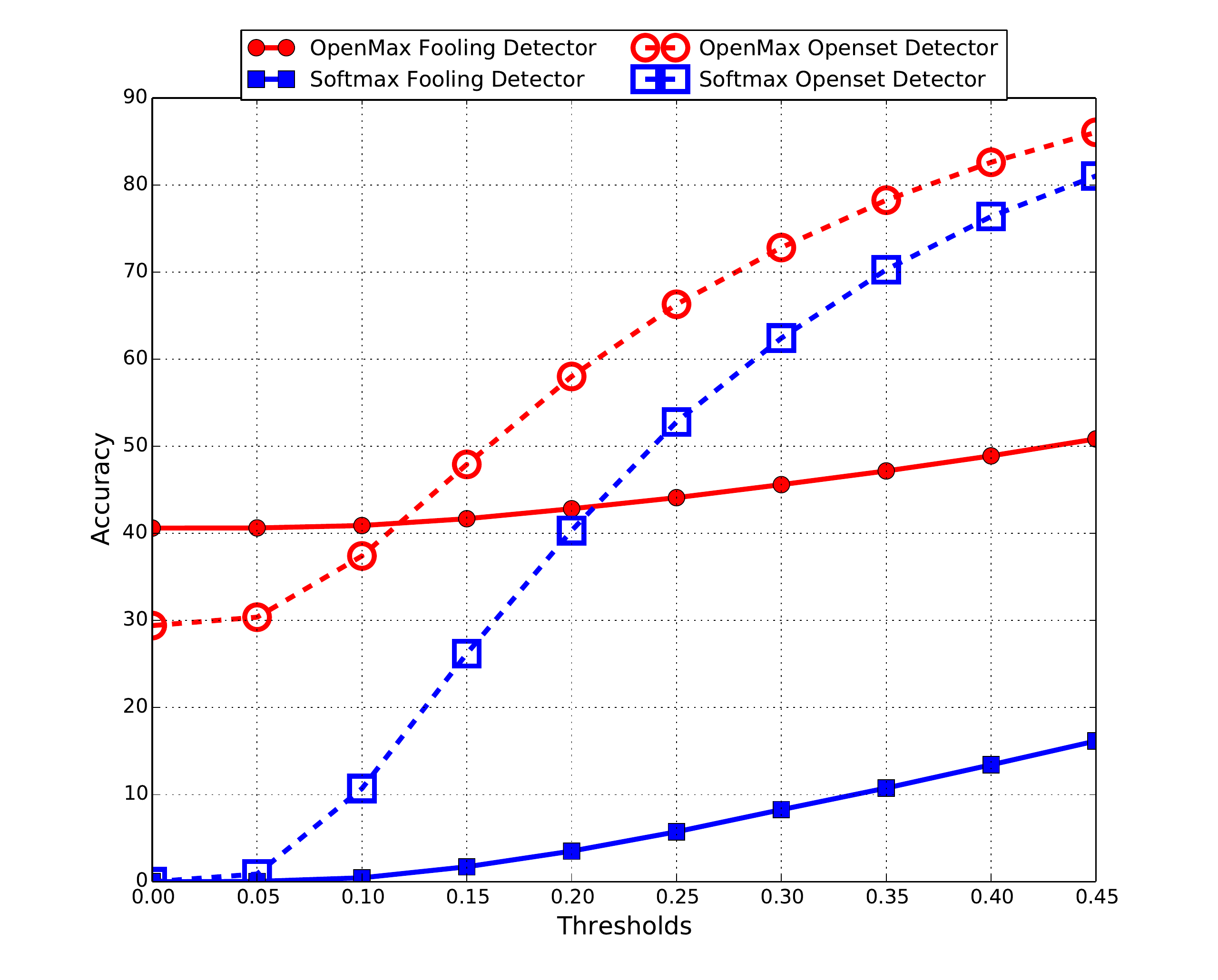}
                \caption{Tail Size 30}
        \end{subfigure}
        ~
        \begin{subfigure}[b]{0.3\textwidth}
                \includegraphics[width=\textwidth]{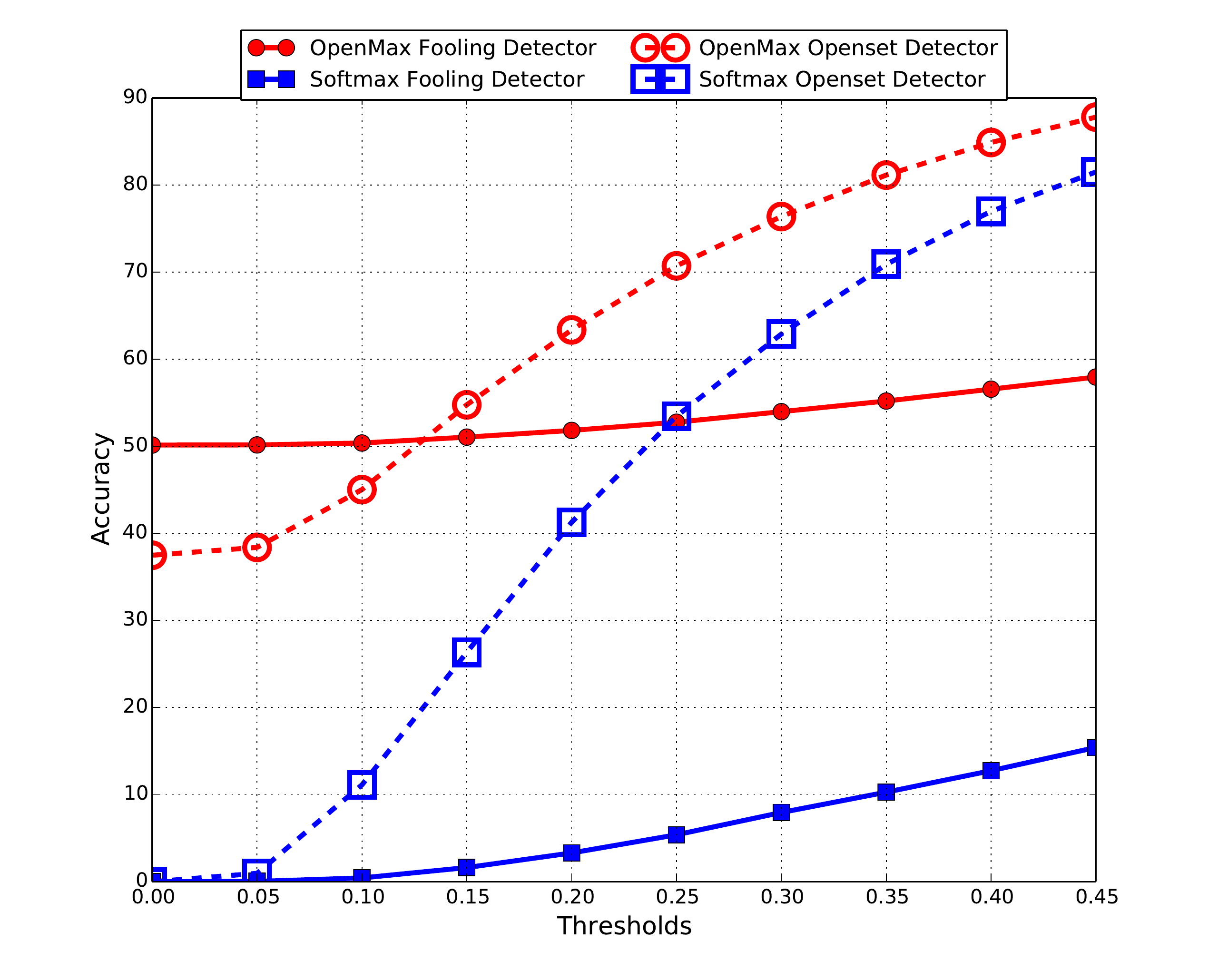}
                \caption{Tail Size 40}

        \end{subfigure}
        ~
        \begin{subfigure}[b]{0.3\textwidth}
                \includegraphics[width=\textwidth]{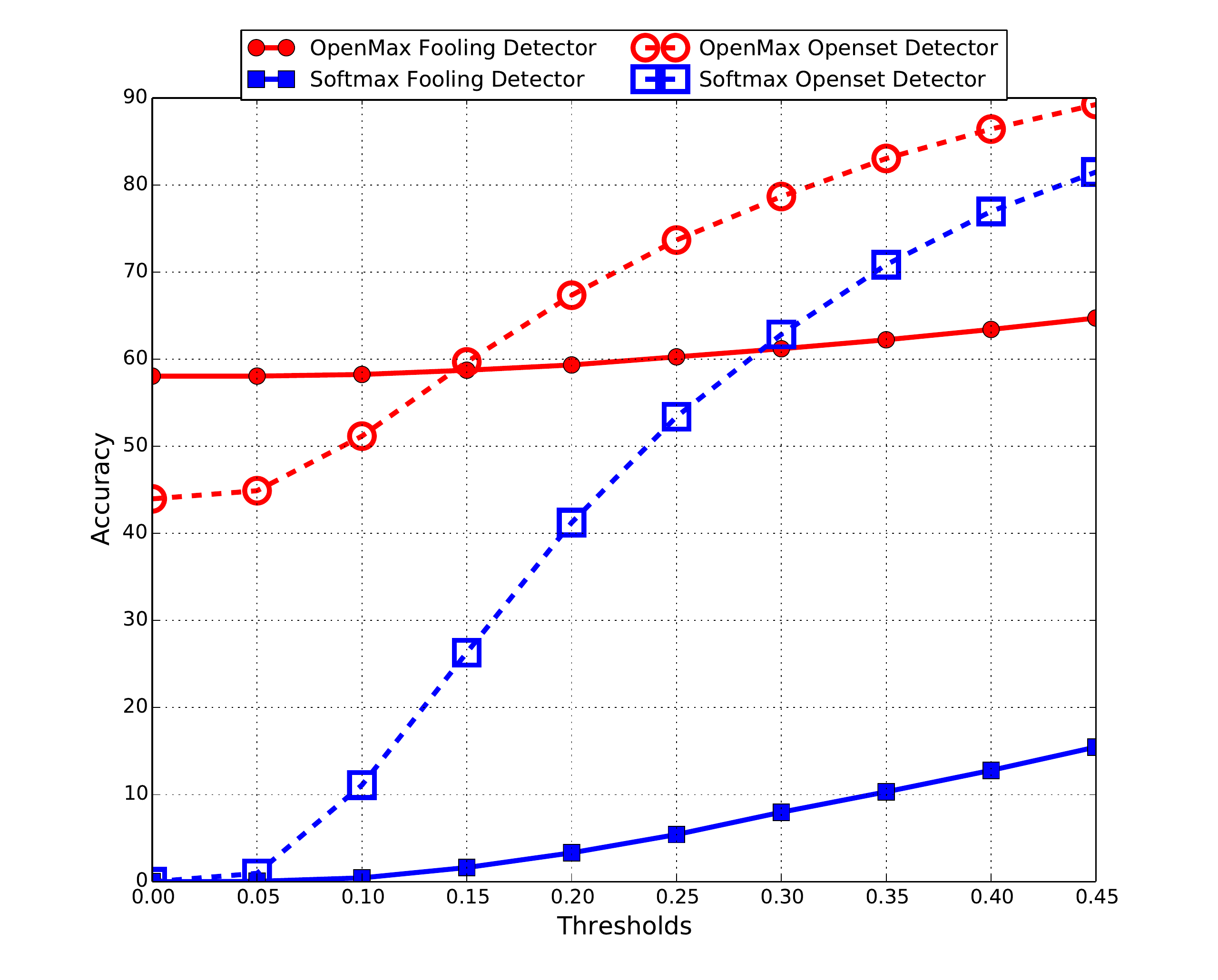}
                \caption{Tail Size 50}
        \end{subfigure}
        ~
        \caption{The graphs shows fooling detection accuracy and open set
          detection accuracy for varying tail sizes of EVT fitting. The graphs plot accuracy vs
          varying uncertainty threshold values, with different tails in each graph. We observe
          that OpenMax consistently performs better than SoftMax for varying
          tail sizes. However, while increasing tail size increases OpenMax
          rejections for open set and fooling, it also increases rejection for
          true images thereby reducing accuracy on validation set as well, see
          Fig ~\ref{fig:tailsizes-fmeasures}. These type of accuracy plots are
          often problematic for open set testing which is why in Fig
          ~\ref{fig:tailsizes-fmeasures} we use F-measure to better balance
          rejection and true acceptance. In the main paper, tail size of 20 was
          used for all the experiments.}
        \label{fig:tailsizes}
\end{figure*}

\begin{figure*}
        \centering
        \begin{subfigure}[b]{0.3\textwidth}
                \includegraphics[width=\textwidth]{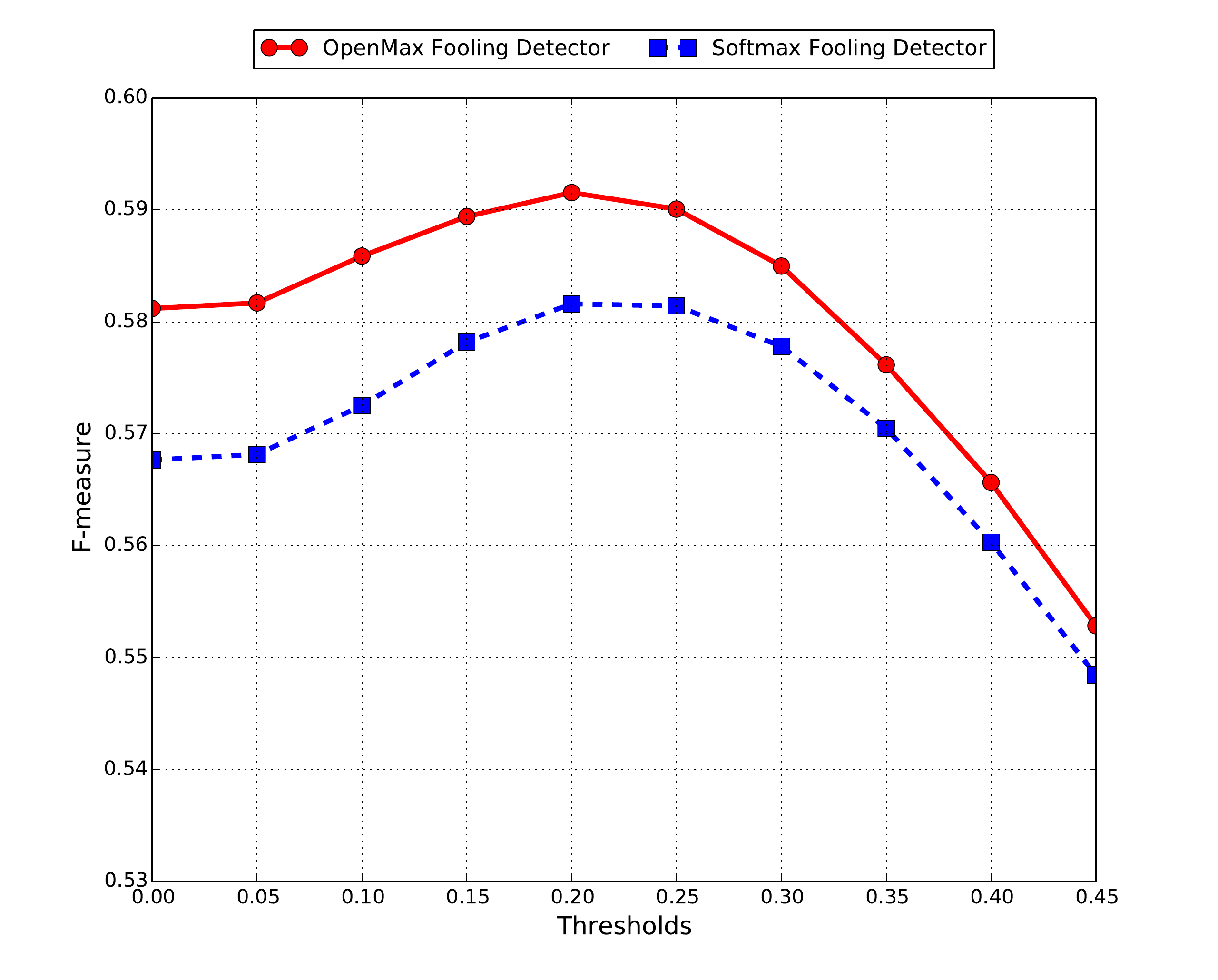}
                \caption{Tail Size 10}
                 \vspace{20pt}
        \end{subfigure}%
        ~ 
        \begin{subfigure}[b]{0.3\textwidth}
                \includegraphics[width=\textwidth]{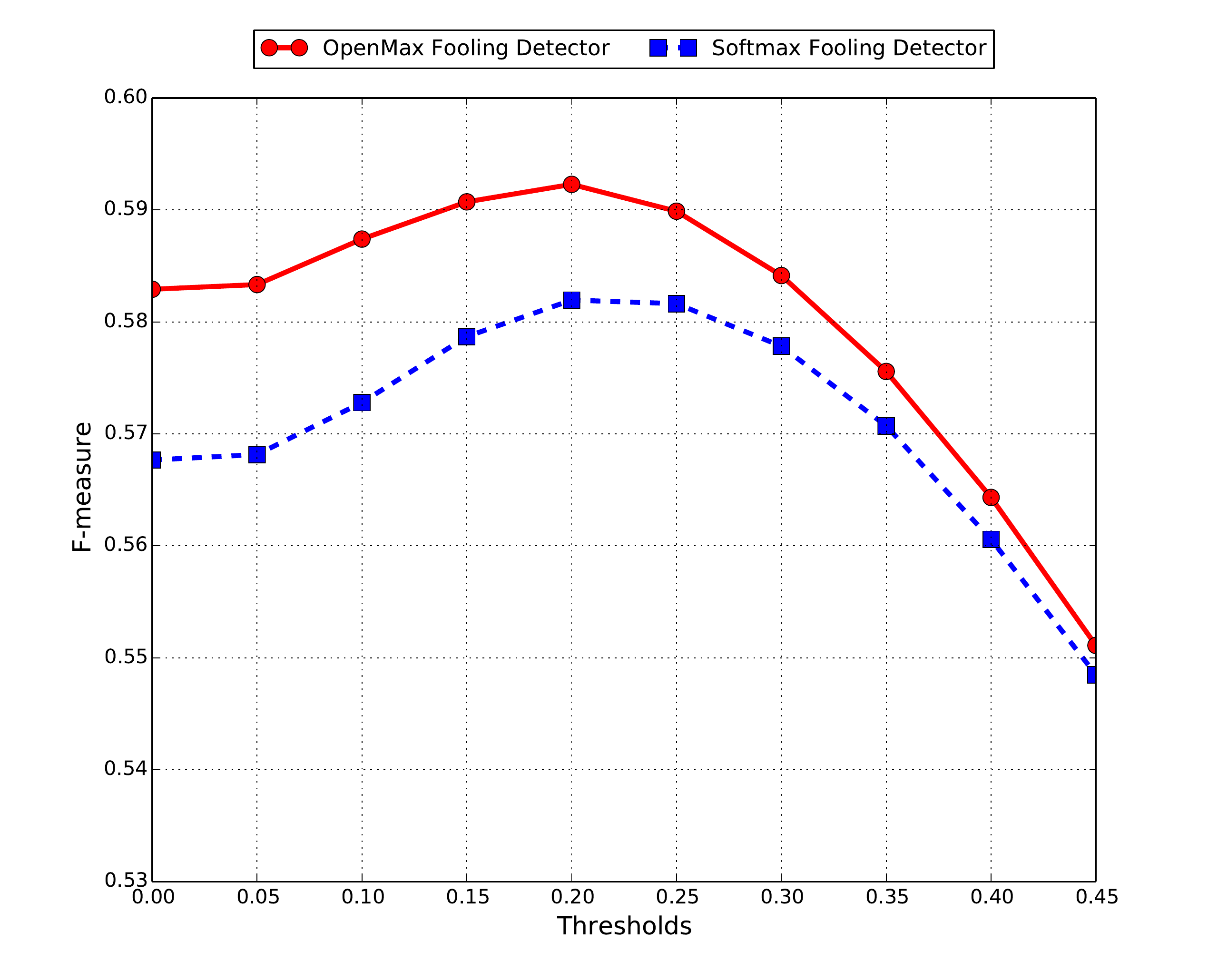}
                \caption{Tail Size 20 (optimal)}
	\vspace{20pt}
        \end{subfigure}
        ~ 
        \begin{subfigure}[b]{0.3\textwidth}
                \includegraphics[width=\textwidth]{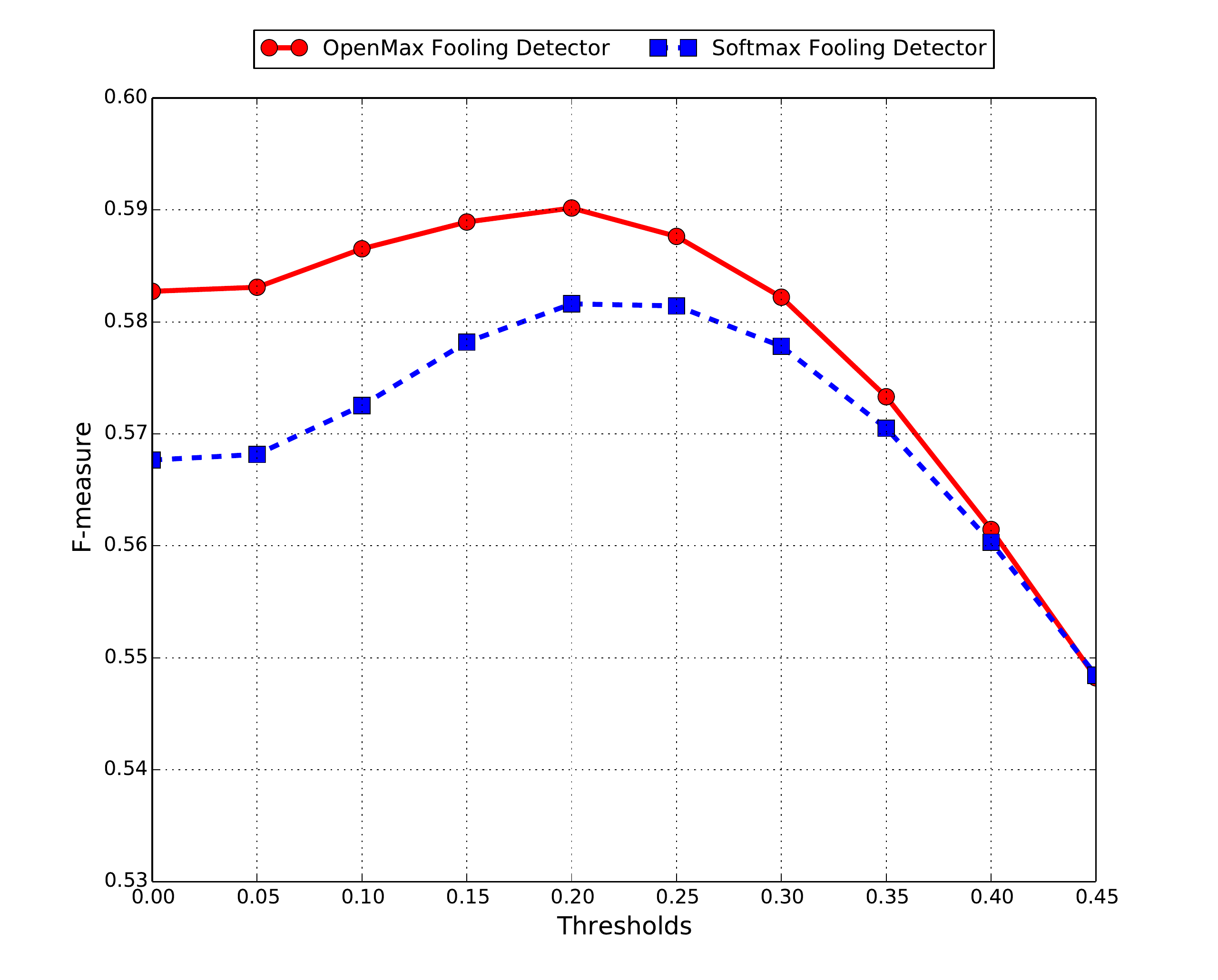}
                \caption{Tail Size 25}
                        \vspace{20pt}
        \end{subfigure}

        \begin{subfigure}[b]{0.3\textwidth}
                \includegraphics[width=\textwidth]{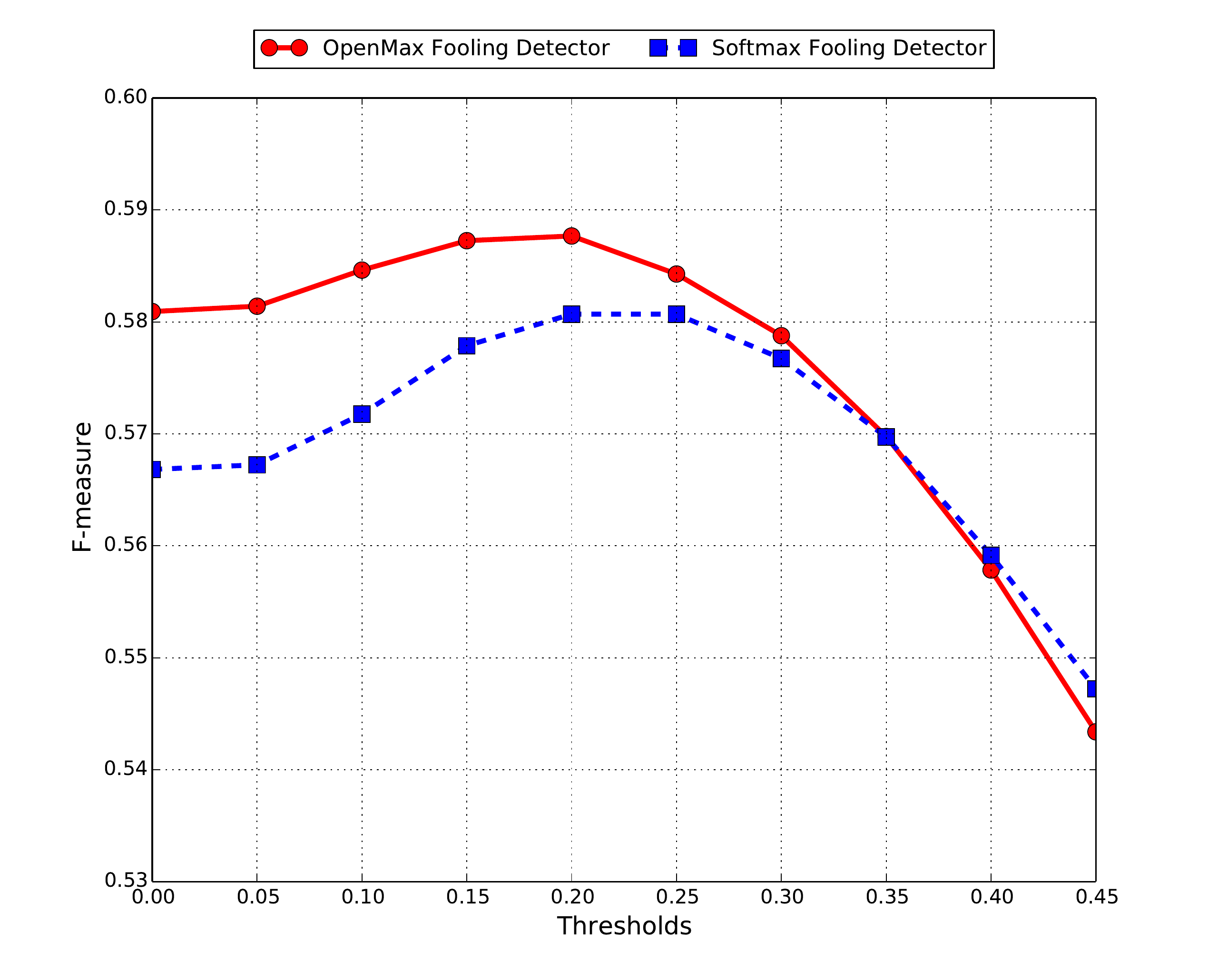}
                \caption{Tail Size 30}
        \end{subfigure}
        ~
        \begin{subfigure}[b]{0.3\textwidth}
                \includegraphics[width=\textwidth]{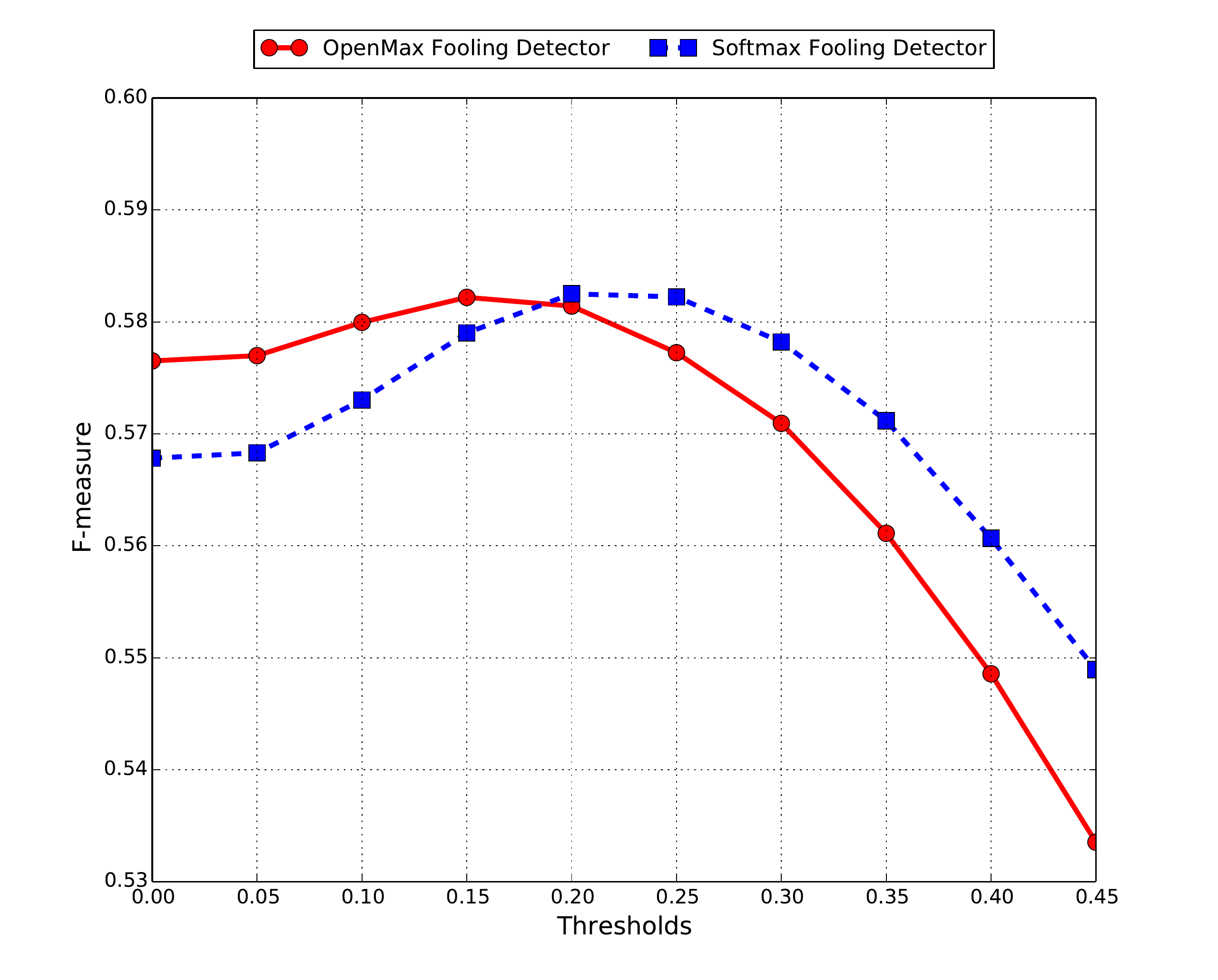}
                \caption{Tail Size 40}
        \end{subfigure}
        ~
        \begin{subfigure}[b]{0.3\textwidth}
                \includegraphics[width=\textwidth]{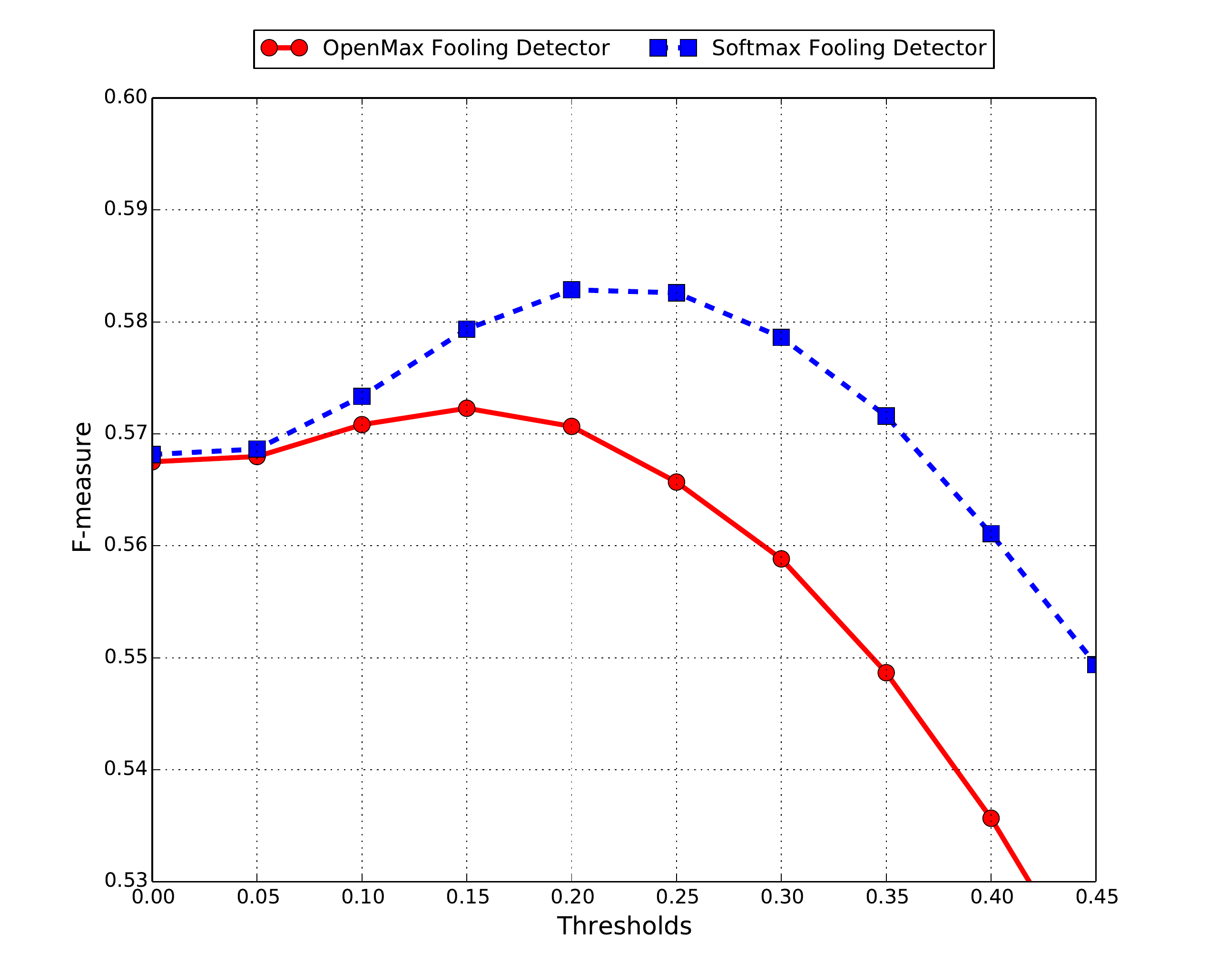}
                \caption{Tail Size 50}
        \end{subfigure}
        ~
        \caption{The graphs shows F-Measure performance of OpenMax and Softmax
          with Open Set testing (using validation, fooling and open set images
          for testing). Each graph shows F-measure plotted against varying
          uncertainty threshold values. Tail size varies in different plots.
          OpenMax reaches its optimal performance at tail size 20. For tail
          sizes larger than 20, though OpenMax becomes good at rejecting images
          from fooling set and open set (Fig ~\ref{fig:tailsizes}), it also
          rejects true images thus reducing accuracy on validation set. Hence,
          we choose tail size 20 for our experiments in main paper.}
         
        \label{fig:tailsizes-fmeasures}
\end{figure*}

\begin{figure*}
        \centering
        \begin{subfigure}[b]{0.3\textwidth}
                \includegraphics[width=\textwidth]{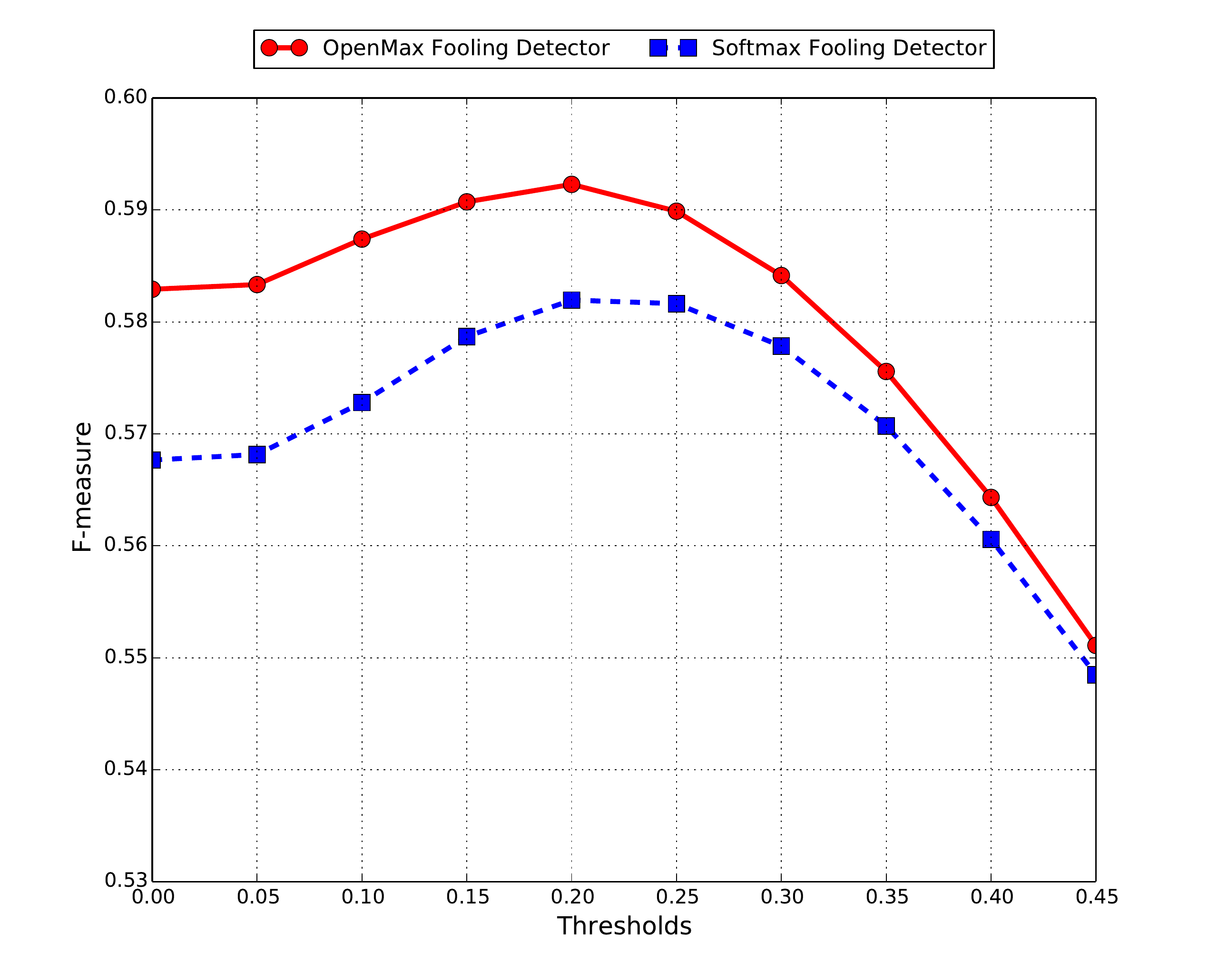}
                \caption{Tail Size 20, Alpha Rank 5}
                                 \vspace{20pt}
        \end{subfigure}%
        ~ 
        \begin{subfigure}[b]{0.3\textwidth}
                \includegraphics[width=\textwidth]{fmeasure_tailsize_20_alpha_rank_10_distance_type_eucos_all.pdf}
                \caption{Tail Size 20, Alpha Rank 10 (optimal)}
                                 \vspace{20pt}
        \end{subfigure}
           ~
        \caption{The above figure shows performance of OpenMax and Softmax as number of top classes to be considered for recalibrating
        are changed. In our experiments, we found best performance when top 10 classes (i.e. $\alpha=10$) were considered for recalibration. }
        \label{fig:alpha-rank-fmeasure}
\end{figure*}

\begin{figure*}
        \centering
        \begin{subfigure}[b]{0.3\textwidth}
                \includegraphics[width=\textwidth]{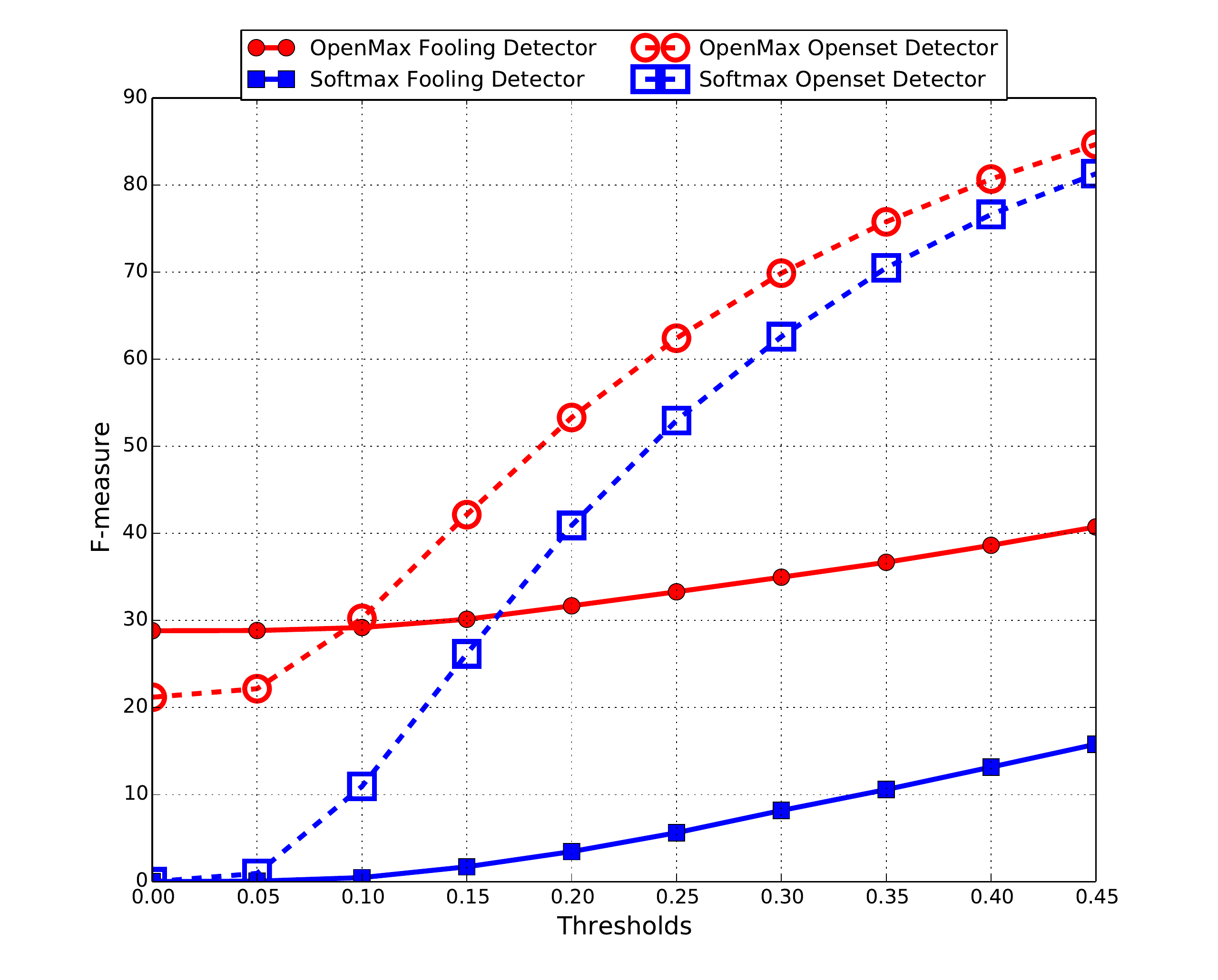}
                \caption{Tail Size 20, Alpha Rank 5}
                                 \vspace{20pt}
                \label{fig:hog-closed}
        \end{subfigure}%
        ~ 
        \begin{subfigure}[b]{0.3\textwidth}
                \includegraphics[width=\textwidth]{tailsize_20_alpha_rank_10_distance_type_eucos_all.pdf}
                \caption{Tail Size 20, Alpha Rank 10 (optimal)}
                                 \vspace{20pt}
        \end{subfigure}
           ~
        \caption{The figure shows fooling detection and open set detection accuracy for varying $alpha$ sizes. In our experiments,
        $alpha$ rank of 10 yielded best results. Increasing $alpha$ value beyond 10 did not result in any performance gains.}
         
        \label{fig:alpha-rank}
\end{figure*}

\begin{figure*}
        \centering
        \begin{subfigure}[b]{0.3\textwidth}
                \includegraphics[width=\textwidth]{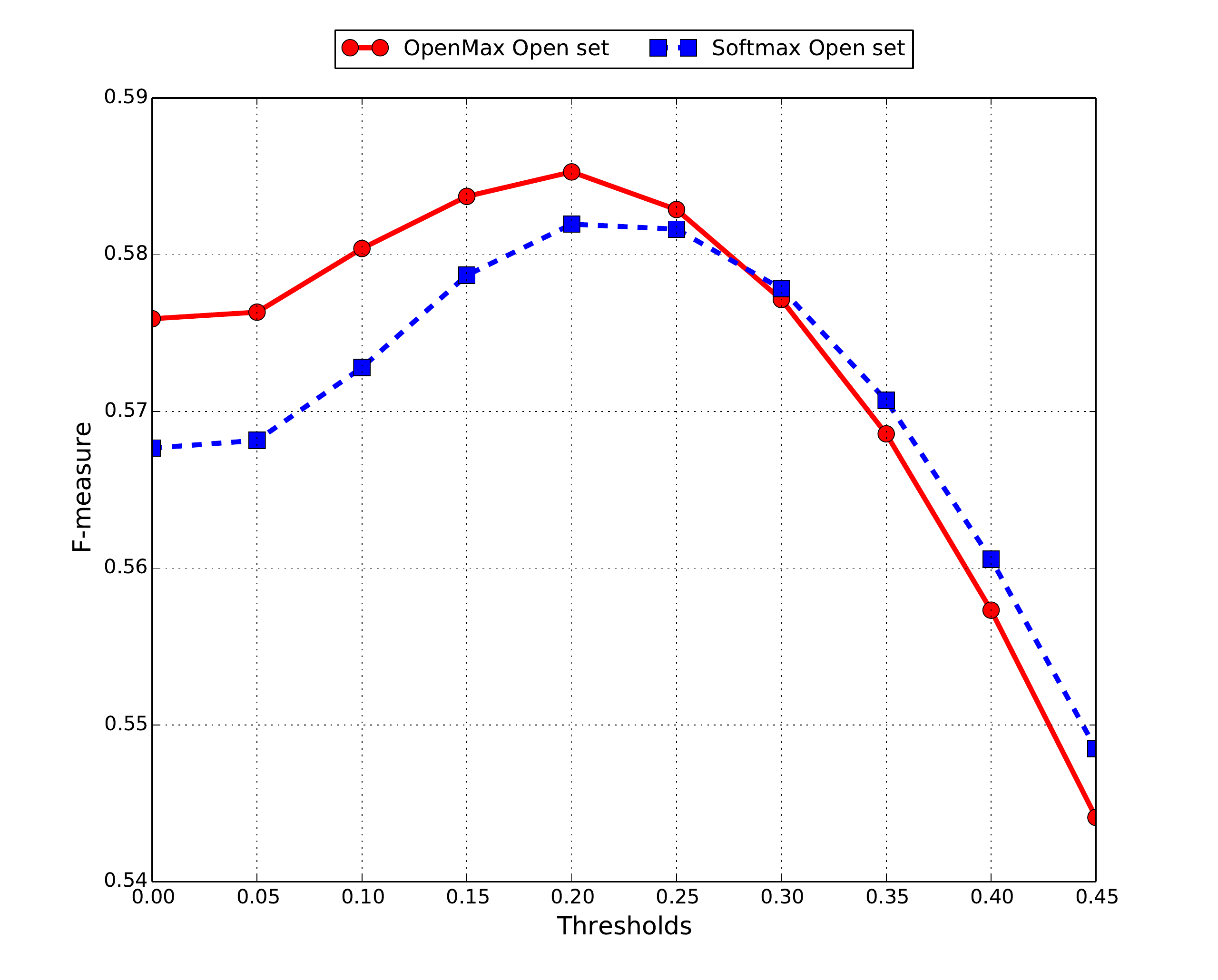}
                \caption{Cosine Distance, Tail Size 20, Alpha Rank 10}
                                 \vspace{20pt}
        \end{subfigure}%
        ~ 
        \begin{subfigure}[b]{0.3\textwidth}
                \includegraphics[width=\textwidth]{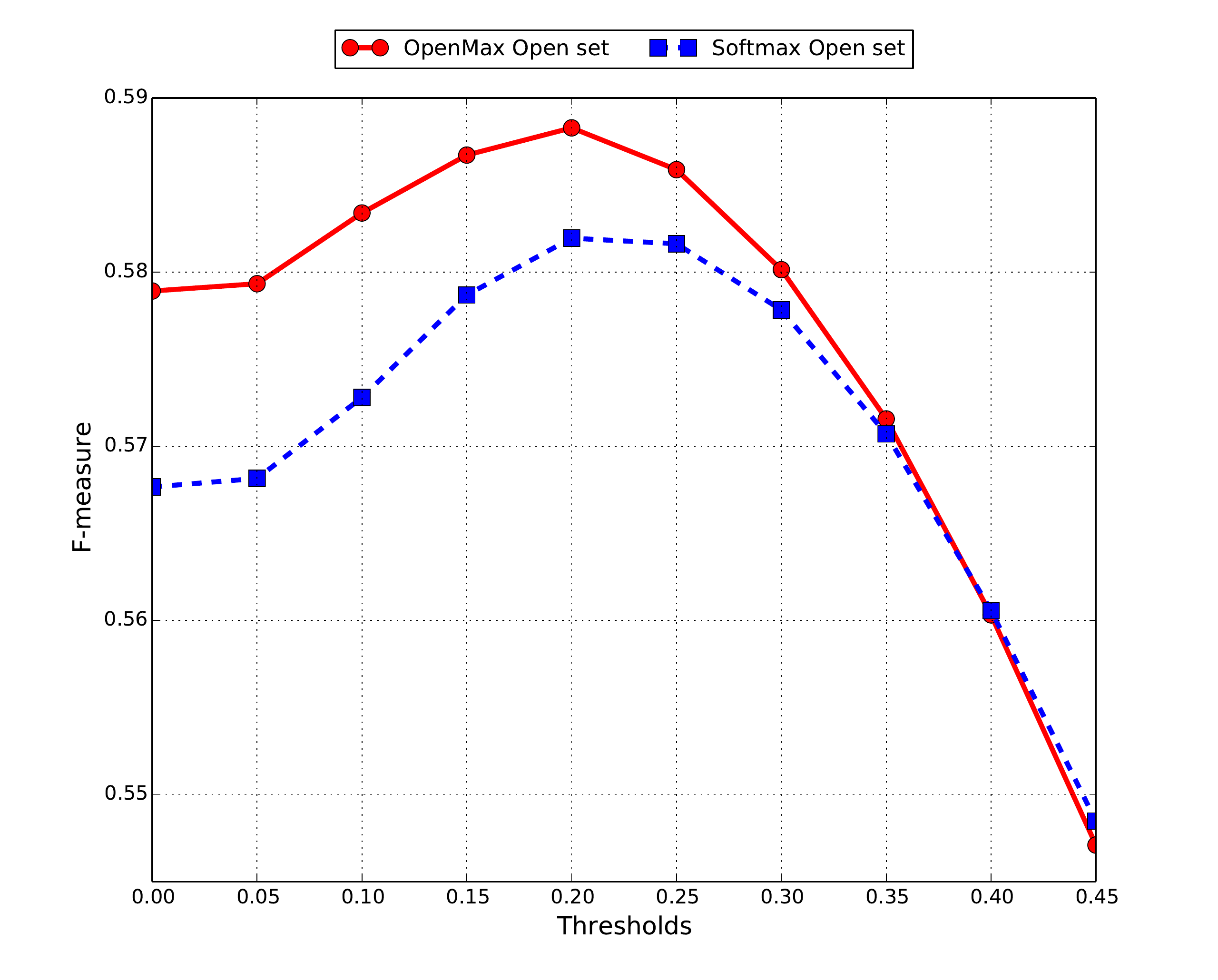}
                \caption{Euclidean Distance, Tail Size 20, Alpha Rank 10}
                                 \vspace{20pt}
        \end{subfigure}
           ~           
             \begin{subfigure}[b]{0.3\textwidth}
                \includegraphics[width=\textwidth]{fmeasure_tailsize_20_alpha_rank_10_distance_type_eucos_all.pdf}
                \caption{Euclidean-Cosine distance, Tail Size 20, Alpha Rank 10 (optimal) Note scale difference!}
                                 \vspace{20pt}                                 
        \end{subfigure}
        ~
        \caption{The above figure shows performance of OpenMax and Softmax for different types of distance measures. We found
        the performance trend to be similar, with euclidean-cosine distance performing best. }
        \label{fig:distance-fmeasures}
\end{figure*}

\subsection{Top Classes to be considered for revision $\alpha$}

In Alg 2 of the main paper, we present a methodology to calibrate FC8 scores via OpenMax. In this process, we also incorporate a process to adjust class probability as well as estimating the probability for the unknown unknown class. For this purpose, in Alg 2 (main paper), we consider ``top'' classes to
revise (line 2, Alg 2, main paper), which is controlled by parameter $\alpha$. We call this parameter as $\alpha$ rank, where value
of $\alpha$ suggests total number of ``top'' classes to revise. In our experiments we found that optimal performance is obtained when
$\alpha=10$. At lower values of $\alpha$ we see drop in F-Measure performance. If we continue to increase $\alpha$ values beyond
10, we see almost no gain in F-Measure performance or fooling/open set detection accuracy. The most likely reason for this lack of change
in performance beyond $\alpha=10$ is lower ranked classes have very small FC8 activations and do not provide any significant change
in OpenMax probability. The results for varying values of $\alpha$ are presented in Figs ~\ref{fig:alpha-rank-fmeasure} and 
~\ref{fig:alpha-rank}.

\subsection{Distance Measures}

We tried different distance measures to compute distances between Mean
Activation Vectors and Activation Vector of an incoming test image. We tried
cosine distance, euclidean distance and euclidean-cosine distance. Cosine
distance and euclidean distances compared marginally worse compared to
euclidean-cosine distance. Cosine distance does not provide for a compact
abating property hence may not restrict open space for points that have small
degree of separation in terms of angle but still far away in terms of euclidean
distance. Euclidean-cosine distance finds the closest points in a hyper-cone,
thus restricting open space and finding closest points to Mean Activation
Vector. Euclidean distance and euclidean-cosine distance performed very similar
in terms of performance. In Fig ~\ref{fig:distance-fmeasures} we show effect of
different distances on over all performance. We see that OpenMax still performs
better than SoftMax, and euclidean-cosine distance performs the best of those
tested.

\section{Qualitative Examples}

It is often useful to look at qualitative examples of success and failure.
Fig.~\ref{fig:openset-wrong} -- Fig.~\ref{fig:openset-wrong2} shows examples where OpenMax failed to detect
open set examples. Some of these were from classess in ILSVRC 2010 that were
close but not identical to classes in ILSVRC 2012. Other examples are objects
from distinct ILSVRC 2010 classes that were visually very similar to a
particular object class in ILSVRC 2012. Finally we show an example where OpenMax
processed a ILSVRC 2012 validation image but reduced its probability thus Caffe
with SoftMax provides the correct answer but OpenMax gets this example wrong.

\section{Confusion Map of Mean Activation Vectors}
Because detection/rejection of unknown classes depends on the distance mean
activation vector (MAV) of the highest scoring FC8 classes. Note this is
different from finding the distance from the input to the cloest MAV. However, we
still find that for unknown classes that are only fine-grain variants of known
classes, the system will not likely reject them. Similarly for adversarial images,
if an image is adversarially modified to a ``neary-by'' is is much less likely the
OpenMax will reject/detect it. Thus it is useful to consider the confusion
between existing classes.

\section{Comparison with the 1-vs-set algorithm. }
The main paper focused on direct extensions within the Deep Networks. While we
consider it tangential, reviewers might worry that applying other models, e.g. a
linear based 1-vs-set open set algorithm\cite{openset-pami13} to the FC8 data
would provide better results. For completeness we did run these experiments. We
used liblinear to train a linear SVM on the training samples from the 1000
classes. We also trained a 1-vs-set machine using the liblinear extension cited
in \cite{openworld_2015}, refining it on the training data for the 1000 classes.
The 1-Vs-Set algorithm achieves an overall F-measure of only .407, which is much
lower than the .595 of the OpenMax approach.

\begin{figure*}
          \begin{subfigure}[b]{\columnwidth}
            {  \includegraphics[width=\columnwidth]{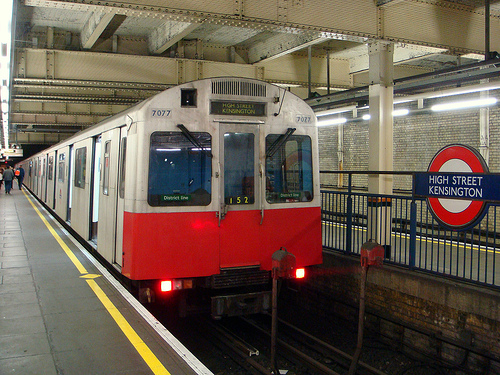}}
          \end{subfigure}
          \begin{subfigure}[b]{\columnwidth}
            {  \includegraphics[width=\columnwidth]{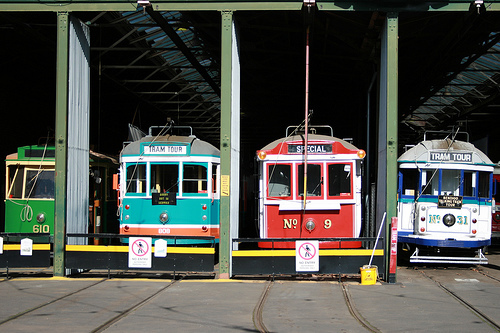}}
            \end{subfigure}
                \caption{Left is an Image from ILSVRC 2010, ``subway train'', n04349306. OpenMax and Softmax both classify as n04335435.
                Instead of `unknown''. OpenMax predicts that the image on left belongs to category
          ``n04335435:streetcar, tram, tramcar, trolley, trolley car'' from  ILSVRC 2012 with an output probability of 0.6391 (caffe probability
                0.5225).  Right is an example image image from ILSVRC 2012, ``streetcar,
                  tram, tramcar, trolley, trolley car'', n04335435 It is easy to
                  see such mistakes are bound to happen since open set classes
                  from ILSVRC 2010 may have have many related categories which
                  have different names, but which are semantically or visually
                  are very similar.  This is why fooling rejection is much stronger than open set rejection.}
                \label{fig:openset-wrong}
\end{figure*}

\begin{figure*}
        \begin{subfigure}[b]{\columnwidth}
              \centerline{\includegraphics[width=.8\columnwidth]{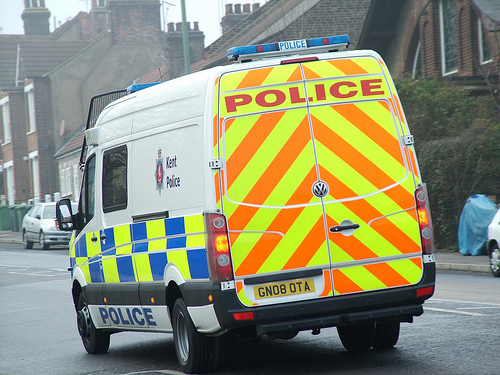}}
                \caption{ A validation example of Openmax Failure/    Softmax labeles it correctly as n03977966 (Police van/police wagon) with probability 0.6463, while openmax incorrect labels it n02701002  (Ambulance) with probability 0.4507. 
) }
                                 \vspace{20pt}
        \end{subfigure}\quad
                \begin{subfigure}[b]{\columnwidth}
                \centerline{\includegraphics[width=.8\columnwidth]{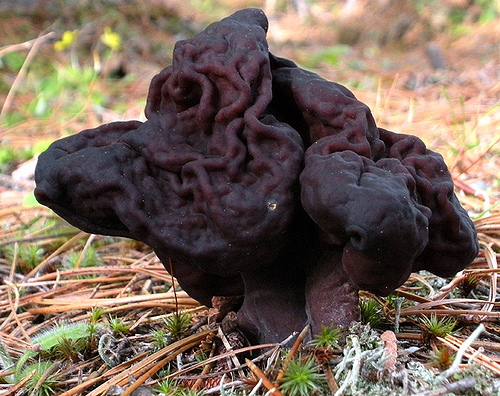}}
                \caption{ Another validation failure example, where softmax
                  classifies it as n13037406 with probability 0.9991, while
                  OpenMax rejects it as unknown. n13037406 is a gyromitra, which
                  is genus of mushroom.}
                                 \vspace{20pt}
        \end{subfigure}
        \caption{The above figure shows an examples of validation image misclassification by
          OpenMax algorithm.         }
        \label{fig:openset-wrong2}
\end{figure*}

\begin{figure*}
\centering
 \includegraphics[width=\textwidth]{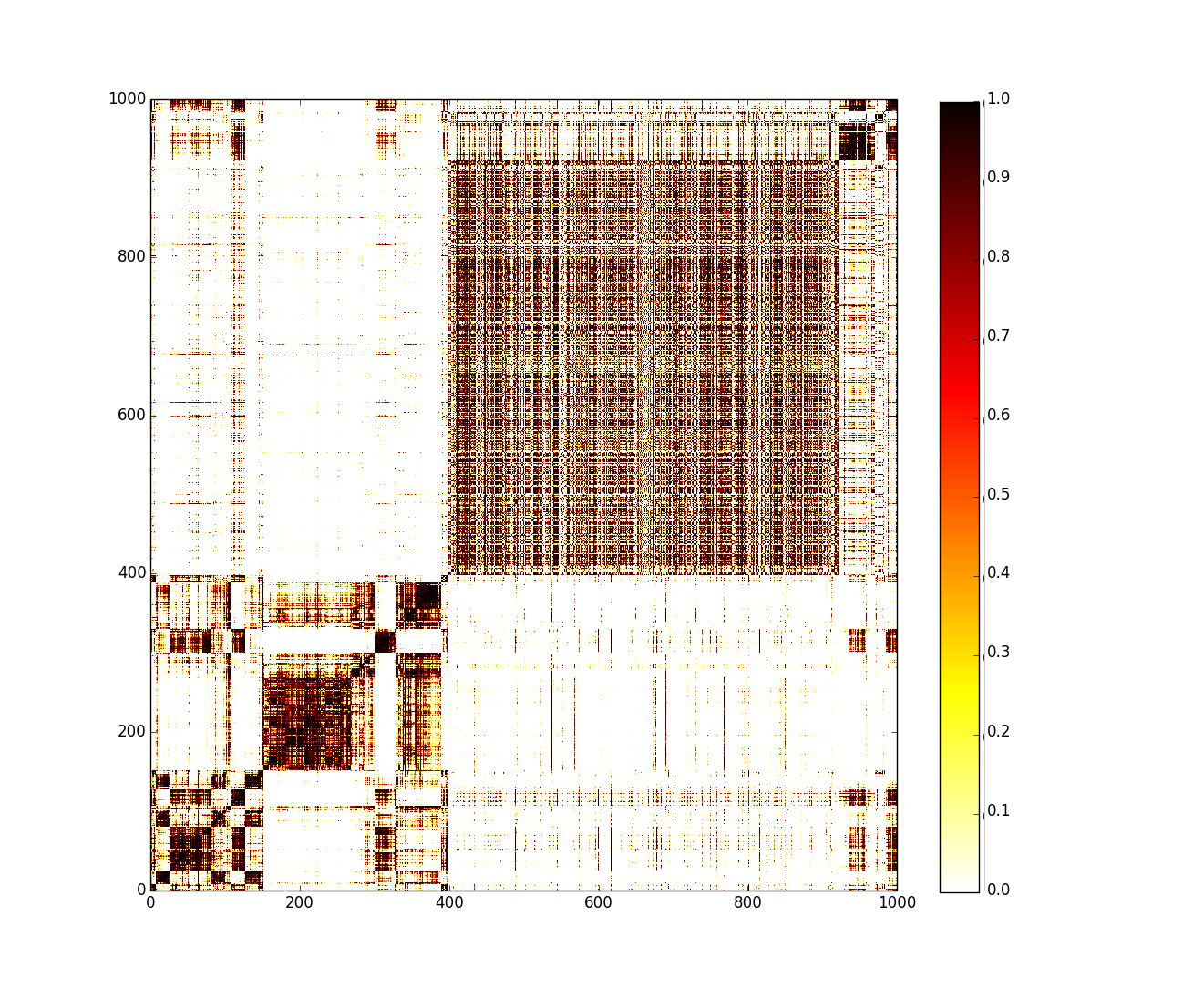}
 \caption{The above figure shows confusion matrix of  distances between Mean Activation Vector (MAV) for each class in ILSVRC 2012
 with MAV of every other class. Lower values of  distances indicate that MAVs for respective classes are very close to each other,
 and higher values of distances indicate classes that are far apart. Majority of misclassifications for OpenMax happen in
 fine-grained categorization, which is to be expected.}
\end{figure*}

{\small
\bibliographystyle{ieee}
\bibliography{opennet_cvpr16}

\begin{thebibliography}{10}\itemsep=-1pt

\bibitem{openworld_2015}
A.~Bendale and T.~E. Boult.
\newblock Towards open world recognition.
\newblock In {\em The IEEE Conference on Computer Vision and Pattern
  Recognition (CVPR)}, pages 1893--1902, June 2015.

\bibitem{novety-jenawacv15}
P.~Bodesheim, A.~Freytag, E.~Rodner, and J.~Denzler.
\newblock Local novelty detection in multi-class recognition problems.
\newblock In {\em Winter Conference on Applications of Computer Vision, 2015
  IEEE Conference on}. IEEE, 2015.

\bibitem{vedaldi-VGG-bmvc14}
K.~Chatfield, K.~Simonyan, A.~Vedaldi, and A.~Zisserman.
\newblock Return of the devil in the details: Delving deep into convolutional
  nets.
\newblock In {\em Proceedings of the British Machine Vision Conference,(BMVC)},
  2014.

\bibitem{da-aaai14}
Q.~Da, Y.~Yu, and Z.-H. Zhou.
\newblock Learning with augmented class by exploiting unlabeled data.
\newblock In {\em AAAI Conference on Artificial Intelligence}. AAAI, 2014.

\bibitem{google}
T.~Dean, M.~A. Ruzon, M.~Segal, J.~Shlens, S.~Vijayanarasimhan, and J.~Yagnik.
\newblock Fast, accurate detection of 100,000 object classes on a single
  machine.
\newblock In {\em Computer Vision and Pattern Recognition (CVPR), 2013 IEEE
  Conference on}, pages 1814--1821. IEEE, 2013.

\bibitem{microsoft}
H.~Fang, S.~Gupta, F.~Iandola, R.~Srivastava, L.~Deng, P.~Dollar, J.~Gao,
  X.~He, M.~Mitchell, P.~John, L.~Zitnick, and G.~Zweig.
\newblock From captions to visual concepts and back.
\newblock In {\em The IEEE Conference on Computer Vision and Pattern
  Recognition (CVPR)}. IEEE, 2015.

\bibitem{frome2013devise}
A.~Frome, G.~S. Corrado, J.~Shlens, S.~Bengio, J.~Dean, T.~Mikolov, et~al.
\newblock Devise: A deep visual-semantic embedding model.
\newblock In {\em Advances in Neural Information Processing Systems}, pages
  2121--2129, 2013.

\bibitem{goodfellow-iclr15}
I.~Goodfellow, J.~Shelns, and C.~Szegedy.
\newblock Explaining and harnessing adversarial examples.
\newblock In {\em International Conference on Learning Representations}.
  Computational and Biological Learning Society, 2015.

\bibitem{ferrari-mr-eccv-12}
N.~Jammalamadaka, A.~Zisserman, M.~Eichner, V.~Ferrari, and C.~Jawahar.
\newblock Has my algorithm succeeded? an evaluator for human pose estimators.
\newblock In {\em Computer Vision--ECCV 2012}. Springer, 2014.

\bibitem{jia2014caffe}
Y.~Jia, E.~Shelhamer, J.~Donahue, S.~Karayev, J.~Long, R.~Girshick,
  S.~Guadarrama, and T.~Darrell.
\newblock Caffe: Convolutional architecture for fast feature embedding.
\newblock {\em arXiv preprint arXiv:1408.5093}, 2014.

\bibitem{hinton-nips12}
A.~Krizhevsky, I.~Sutskever, and G.~E. Hinton.
\newblock Imagenet classification with deep convolutional neural networks.
\newblock In {\em Advances in neural information processing systems (NIPS)},
  pages 1097--1105, 2012.

\bibitem{mensink-eccv12}
T.~Mensink, J.~Verbeek, F.~Perronnin, and G.~Csurka.
\newblock Metric learning for large scale image classification: Generalizing to
  new classes at near-zero cost.
\newblock In {\em ECCV}, 2012.

\bibitem{mensink2013distance}
T.~Mensink, J.~Verbeek, F.~Perronnin, and G.~Csurka.
\newblock Distance-based image classification: Generalizing to new classes at
  near-zero cost.
\newblock {\em Pattern Analysis and Machine Intelligence, IEEE Transactions
  on}, 35(11):2624--2637, 2013.

\bibitem{nguyen2015deep}
A.~Nguyen, J.~Yosinski, and J.~Clune.
\newblock Deep neural networks are easily fooled: High confidence predictions
  for unrecognizable images.
\newblock In {\em Computer Vision and Pattern Recognition (CVPR), 2015 IEEE
  Conference on}. IEEE, 2015.

\bibitem{sivic-local-cvpr15}
M.~Oquab, L.~Bottou, I.~Laptev, and J.~Sivic.
\newblock Is object localization for free? weakly-supervised learning with
  convolutional neural networks.
\newblock In {\em The IEEE Conference on Computer Vision and Pattern
  Recognition (CVPR)}, June 2015.

\bibitem{ebay}
V.~Ordonez, V.~Jagadeesh, W.~Di, A.~Bhardwaj, and R.~Piramuthu.
\newblock Furniture-geek: Understanding fine-grained furniture attributes from
  freely associated text and tags.
\newblock In {\em Applications of Computer Vision (WACV), 2014 IEEE Winter
  Conference on}, pages 317--324. IEEE, 2014.

\bibitem{ristin-ncmf}
M.~Ristin, M.~Guillaumin, J.~Gall, and L.~VanGool.
\newblock Incremental learning of ncm forests for large scale image
  classification.
\newblock {\em CVPR}, 2014.

\bibitem{rumsfeld2011known}
D.~Rumsfeld.
\newblock {\em Known and unknown: a memoir}.
\newblock Penguin, 2011.

\bibitem{ILSVRC15}
O.~Russakovsky, J.~Deng, H.~Su, J.~Krause, S.~Satheesh, S.~Ma, Z.~Huang,
  A.~Karpathy, A.~Khosla, M.~Bernstein, A.~C. Berg, and L.~Fei-Fei.
\newblock {ImageNet Large Scale Visual Recognition Challenge}.
\newblock {\em International Journal of Computer Vision (IJCV)}, pages 1--42,
  April 2015.

\bibitem{openset-pami13}
W.~J. Scheirer, A.~de~Rezende~Rocha, A.~Sapkota, and T.~E. Boult.
\newblock Toward open set recognition.
\newblock {\em Pattern Analysis and Machine Intelligence, IEEE Transactions
  on}, 35(7):1757--1772, 2013.

\bibitem{openset-prob-pami14}
W.~J. Scheirer, L.~P. Jain, and T.~E. Boult.
\newblock Probability models for open set recognition.
\newblock {\em Pattern Analysis and Machine Intelligence, IEEE Transactions
  on}, 36(11):2317--2324, 2014.

\bibitem{metarecognition-pami11}
W.~J. Scheirer, A.~Rocha, R.~J. Micheals, and T.~E. Boult.
\newblock Meta-recognition: The theory and practice of recognition score
  analysis.
\newblock {\em Pattern Analysis and Machine Intelligence, IEEE Transactions
  on}, 33(8):1689--1695, 2011.
\newblock libMR code at \url{http://metarecognition.com}.

\bibitem{karen-iclr15}
K.~Simonyan and A.~Zisserman.
\newblock Very deep convolutional networks for large scale image recognition.
\newblock In {\em International Conference on Learning Representations}.
  Computational and Biological Learning Society, 2015.

\bibitem{socher2013zero}
R.~Socher, M.~Ganjoo, C.~D. Manning, and A.~Ng.
\newblock Zero-shot learning through cross-modal transfer.
\newblock In {\em Advances in neural information processing systems}, pages
  935--943, 2013.

\bibitem{socher2010learning}
R.~Socher, C.~D. Manning, and A.~Y. Ng.
\newblock Learning continuous phrase representations and syntactic parsing with
  recursive neural networks.
\newblock In {\em Proceedings of the NIPS-2010 Deep Learning and Unsupervised
  Feature Learning Workshop}, pages 1--9, 2010.

\bibitem{google-LeNet15}
C.~Szegedy, W.~Liu, Y.~Jia, P.~Sermanet, S.~Reed, D.~Anguelov, D.~Erhan,
  V.~Vanhoucke, and A.~Rabinovich.
\newblock Going deeper with convolutions.
\newblock In {\em Computer Vision and Pattern Recognition (CVPR), 2015 IEEE
  Conference on}. IEEE, 2015.

\bibitem{Fergus-iclr14}
C.~Szegedy, W.~Zaremba, I.~Sutskever, J.~Bruna, D.~Erhan, I.~Goodfellow, and
  R.~Fergus.
\newblock Intriguing properties of neural networks.
\newblock In {\em International Conference on Learning Representations}.
  Computational and Biological Learning Society, 2014.

\bibitem{facebook}
Y.~Taigman, M.~Yang, M.~Ranzato, and L.~Wolf.
\newblock Deepface: Closing the gap to human-level performance in face
  verification.
\newblock In {\em Computer Vision and Pattern Recognition (CVPR), 2014 IEEE
  Conference on}, pages 1701--1708. IEEE, 2014.

\bibitem{hastie-PNAS02}
R.~Tibshirani, T.~Hastie, B.~Narasimhan, and G.~Chu.
\newblock Diagnosis of multiple cancer types by shrunken centroids of gene
  expression.
\newblock In {\em Proceedings of the National Academy of Sciences}. NAS, 2002.

\bibitem{ferrari-object-bmvc15}
A.~Vezhnevets and V.~Ferrari.
\newblock Object localization in imagenet by looking out of the window.
\newblock In {\em Proceedings of the British Machine Vision Conference,(BMVC)},
  2015.

\bibitem{Yosinski-icml15DL}
J.~Yosinski, J.~Clune, A.~Nguyen, T.~Fuchs, and H.~Lipson.
\newblock Understanding neural networks through deep visualization.
\newblock In {\em International Conference on Machine Learning, Workshop on
  Deep Learning}, 2015.

\bibitem{parikh-2014}
P.~Zhang, J.~Wang, A.~Farhadi, M.~Hebert, and D.~Parikh.
\newblock Predicting failures of vision systems.
\newblock In {\em The IEEE Conference on Computer Vision and Pattern
  Recognition (CVPR)}, June 2014.

\end{thebibliography}
}

\end{document}